\newcolumntype{L}[1]{>{\raggedright\let\newline\\\arraybackslash\hspace{0pt}}m{#1}}
\newcolumntype{C}[1]{>{\centering\let\newline\\\arraybackslash\hspace{0pt}}m{#1}}
\newcolumntype{R}[1]{>{\raggedleft\let\newline\\\arraybackslash\hspace{0pt}}m{#1}}
\let\MYcaption\@makecaption
\let\@makecaption\MYcaption
\newacronym{wrt}{w.r.t.}{with respect to}
\newacronym{RHS}{R.H.S.}{right-hand side}
\newacronym{LHS}{L.H.S.}{left-hand side}
\newacronym{iid}{i.i.d.}{independent and identically distributed}
\let\saved@bibitem\@bibitem\makeatother
\let\@bibitem\saved@bibitem\makeatother
\crefname{equation}{}{}
\Crefname{equation}{}{}
\crefname{claim}{claim}{claims}
\crefname{step}{step}{steps}
\crefname{line}{line}{lines}
\crefname{condition}{condition}{conditions}
\crefname{dmath}{}{}
\crefname{dseries}{}{}
\crefname{dgroup}{}{}
\crefname{Theorem}{Theorem}{Theorems}
\crefname{Corollary}{Corollary}{Corollaries}
\crefname{Proposition}{Proposition}{Propositions}
\crefname{Lemma}{Lemma}{Lemmas}
\crefname{Definition}{Definition}{Definitions}
\crefname{Example}{Example}{Examples}
\crefname{Assumption}{Assumption}{Assumptions}
\crefname{Remark}{Remark}{Remarks}
\crefname{Rem}{Remark}{Remarks}
\crefname{remarks}{Remarks}{Remarks}
\crefname{Exercise}{Exercise}{Exercises}
\crefname{Theorem_A}{Theorem}{Theorems}
\crefname{Corollary_A}{Corollary}{Corollaries}
\crefname{Proposition_A}{Proposition}{Propositions}
\crefname{Lemma_A}{Lemma}{Lemmas}
\crefname{Definition_A}{Definition}{Definitions}
\newtheorem{Theorem}{Theorem}
\newtheorem{Corollary}{Corollary}
\newtheorem{Proposition}{Proposition}
\newtheorem{Theorem}{Theorem}
\newtheorem{Assumption}{Assumption}
\theoremstyle{remark}
\theoremstyle{plain}
\newcommand{\Real}{\mathbb{R}}
\newcommand{\calN}{\mathcal{N}}
\newcommand{\calU}{\mathcal{U}}
\newcommand{\bA}{\mathbf{A}}
\newcommand{\bI}{\mathbf{I}}
\newcommand{\bp}{\mathbf{p}}
\newcommand{\bq}{\mathbf{q}}
\newcommand{\bV}{\mathbf{V}}
\newcommand{\bw}{\mathbf{w}}
\newcommand{\bx}{\mathbf{x}}
\newcommand{\by}{\mathbf{y}}
\newcommand{\bbD}{\mathbb{D}}
\newcommand{\bbU}{\mathbb{U}}
\DeclareSymbolFont{bsfletters}{OT1}{cmss}{bx}{n}
\DeclareSymbolFont{ssfletters}{OT1}{cmss}{m}{n}
\DeclareMathSymbol{\bsfGamma}{0}{bsfletters}{'000}
\DeclareMathSymbol{\ssfGamma}{0}{ssfletters}{'000}
\DeclareMathSymbol{\bsfDelta}{0}{bsfletters}{'001}
\DeclareMathSymbol{\ssfDelta}{0}{ssfletters}{'001}
\DeclareMathSymbol{\bsfTheta}{0}{bsfletters}{'002}
\DeclareMathSymbol{\ssfTheta}{0}{ssfletters}{'002}
\DeclareMathSymbol{\bsfLambda}{0}{bsfletters}{'003}
\DeclareMathSymbol{\ssfLambda}{0}{ssfletters}{'003}
\DeclareMathSymbol{\bsfXi}{0}{bsfletters}{'004}
\DeclareMathSymbol{\ssfXi}{0}{ssfletters}{'004}
\DeclareMathSymbol{\bsfPi}{0}{bsfletters}{'005}
\DeclareMathSymbol{\ssfPi}{0}{ssfletters}{'005}
\DeclareMathSymbol{\bsfSigma}{0}{bsfletters}{'006}
\DeclareMathSymbol{\ssfSigma}{0}{ssfletters}{'006}
\DeclareMathSymbol{\bsfUpsilon}{0}{bsfletters}{'007}
\DeclareMathSymbol{\ssfUpsilon}{0}{ssfletters}{'007}
\DeclareMathSymbol{\bsfPhi}{0}{bsfletters}{'010}
\DeclareMathSymbol{\ssfPhi}{0}{ssfletters}{'010}
\DeclareMathSymbol{\bsfPsi}{0}{bsfletters}{'011}
\DeclareMathSymbol{\ssfPsi}{0}{ssfletters}{'011}
\DeclareMathSymbol{\bsfOmega}{0}{bsfletters}{'012}
\DeclareMathSymbol{\ssfOmega}{0}{ssfletters}{'012}
\newcommand{\btheta}{\bm{\theta}}
\DeclareMathOperator*{\argmax}{arg\,max}
\DeclareMathOperator*{\argmin}{arg\,min}
\DeclareMathOperator{\tr}{tr}
\DeclarePairedDelimiter\parens{(}{)}
\DeclarePairedDelimiter\braces{\{}{\}}
\newcommand{\qednew}{\nobreak \ifvmode \relax \else
      \ifdim\lastskip<1.5em \hskip-\lastskip
      \hskip1.5em plus0em minus0.5em \fi \nobreak
      \vrule height0.75em width0.5em depth0.25em\fi}
\newcommand{\nn}{\nonumber\\}
\newcommand{\T}{^{\intercal}}
\newcommand{\ud}{\mathrm{d}}
\newcommand{\indicator}[1]{{\bf 1}_{\braces*{#1}}}
\newcommand{\indicatore}[1]{{\bf 1}_{#1}}
\newcommand{\ofrac}[1]{{\frac{1}{#1}}}
\newcommand{\ceil}[1]{\left\lceil{#1}\right\rceil}
\newcommand{\ip}[2]{{\left\langle{#1},\, {#2}\right\rangle}}
\newcommand{\cond}[2]{\left. {#1}\, \middle| \, {#2} \right.}
\DeclareDocumentCommand \ifcond {m m} {%
	{#1} %
	\IfValueT{#2}{\, \middle|\, {#2}}%
}
\DeclareDocumentCommand \P { g d() g } {%
	\IfNoValueTF {#3} 
	{%
		\IfNoValueTF {#1} 
		{%
			\IfNoValueTF {#2}
			{%
				\mathbb{P}%
			}%
			{%
				\mathbb{P}\left({#2}\right)%
			}%
		}%
		{%
			\IfNoValueTF {#2}
			{%
				\mathbb{P}_{#1}%
			}%
			{%
				\mathbb{P}_{#1}\left({#2}\right)%
			}%
		}%
	}%
	{%
		\IfNoValueTF {#1} 
		{%
			\mathbb{P}\left(\cond{#2}{#3}\right)%
		}%
		{%
			\mathbb{P}_{#1}\left(\cond{#2}{#3}\right)%
		}%
	}%
}
\DeclareDocumentCommand \E { g o g } {%
	\IfNoValueTF {#3} 
	{%
		\IfNoValueTF {#1} 
		{%
			\IfNoValueTF {#2}
			{%
				\mathbb{E}%
			}%
			{%
				\mathbb{E}\left[{#2}\right]%
			}%
		}%
		{%
			\IfNoValueTF {#2}
			{%
				\mathbb{E}_{#1}%
			}%
			{%
				\mathbb{E}_{#1}\left[{#2}\right]%
			}%
		}%
	}%
	{%
		\IfNoValueTF {#1} 
		{%
			\mathbb{E}\left[\cond{#2}{#3}\right]%
		}%
		{%
			\mathbb{E}_{#1}\left[\cond{#2}{#3}\right]%
		}%
	}%
}
\definecolor{gray90}{gray}{0.9}
	\newcommand{\msout}[1]{\text{\color{green} \sout{\ensuremath{#1}}}}
	\newcommand{\del}[1]{{\color{green}\ifmmode \msout{#1}\else\sout{#1}\fi}}
	\newcommand{\msout}[1]{#1}
	\newcommand{\del}[1]{#1}
\newcommand{\hide}[1]{}
	 \def\@testdef #1#2#3{%
		 \def\reserved@a{#3}\expandafter \ifx \csname #1@#2\endcsname
		\reserved@a  \else
	 \typeout{^^Jlabel #2 changed:^^J%
	 \meaning\reserved@a^^J%
	 \expandafter\meaning\csname #1@#2\endcsname^^J}%
	 \@tempswatrue \fi}
\begin{document}

\title{Learning Orthogonal Projections in Linear Bandits}
\author{Qiyu~Kang, and Wee~Peng~Tay, \IEEEmembership{Senior~Member,~IEEE}

\thanks{This research is supported in part by the Singapore Ministry of Education Academic Research Fund Tier 2 grant MOE2018-T2-2-019.}
\thanks{The authors are with the School of Electrical and Electronic Engineering, Nanyang Technological University, Singapore. Email: kang0080@e.ntu.edu.sg, wptay@ntu.edu.sg.}  }

\maketitle

\begin{abstract}
In a linear stochastic bandit model, each arm is a vector in an Euclidean space and the observed return at each time step is an unknown linear function of the chosen arm at that time step. In this paper, we investigate the problem of learning the best arm in a linear stochastic bandit model, where each arm's expected reward is an unknown linear function of the projection of the arm onto a subspace. We call this the projection reward. Unlike the classical linear bandit problem in which the observed return corresponds to the reward, the projection reward at each time step is unobservable. Such a model is useful in recommendation applications where the observed return includes corruption by each individual's biases, which we wish to exclude in the learned model. In the case where there are finitely many arms, we develop a strategy to achieve $O(|\bbD|\log n)$ regret, where $n$ is the number of time steps and $|\bbD|$ is the number of arms. In the case where each arm is chosen from an infinite compact set, our strategy achieves $O(n^{2/3}(\log{n})^{1/2})$ regret. Experiments verify the efficiency of our strategy.
\end{abstract}

\begin{IEEEkeywords}
Linear bandit, multi-armed bandit, orthogonal projection, discrimination aware
\end{IEEEkeywords}

\IEEEpeerreviewmaketitle

\section{Introduction}\label{sec:intro}

\IEEEPARstart{M}{ulti-armed} bandit (MAB) problems, introduced by Robbins \cite{robbins1952some} model the exploration and exploitation trade-off of sequential decision making under uncertainty. In its most basic paradigm, at each time step, the decision maker is given $d$ decisions or arms from which he is supposed to select one, and as a response, he observes a stochastic reward after each decision-making. The arm chosen at each time step is based on the information gathered at all the previous time steps. Therefore, in order to maximize the expected cumulative reward, exploitation of the current empirically best arm and exploration of less frequently chosen arms should be balanced carefully.  Stochastic independence of rewards is assumed for different arms in some works like \cite{auer2002finite, lai1985asymptotically, audibert2009exploration, agrawal1995sample, burnetas1996optimal,kangrecomm, Liu2017, KlosneeMueller2018, KanTay:J19}. In other works like \cite{pandey2007multi,presman1990sequential,goldenshluger2009woodroofe,abbasi2011improved,rusmevichientong2010linearly}, reward dependence between arms is assumed, which allows the decision maker to gather information for more than one arm at each time step. One such specific assumption is that arms are vectors containing numeric elements, and the expected reward of choosing each arm is an inner product of the arm vector with an unknown parameter vector \cite{Gai2012, agrawal2013thompson, abbasi2011improved,rusmevichientong2010linearly}. In \cite{abbasi2011improved, rusmevichientong2010linearly}, the authors proposed effective strategies that balance exploration and exploitation based on the \emph{optimism-in-the-face-of-uncertainty} principle. This principle maintains a high probability confidence set for the estimated parameter vector, and at each time step, the decision maker chooses an arm and a parameter vector from the decision set and the confidence set respectively, so that their inner product is maximized. Note that the independent $d$-armed bandit problem can be viewed as a special case of the general linear stochastic bandit problem, where the set of $d$ available decisions serves as a standard orthonormal basis of $\Real^d$\cite{abbasi2011improved}.  

The linear stochastic bandit model has been successfully applied to some real-world problems like personalized news article recommendation \cite{li2010contextual}, advertisement selection \cite{abe2003reinforcement}, and information retrieval \cite{yue2011linear}.  For example, in news recommendation applications, typical features, including the news' topic categories, the users' race, gender, location, etc., can be treated as components in each arm or decision vector, while the users' clicks are the rewards. A plausible recommendation scheme is to get as many clicks as possible. Another application is in sequential clinical trials \cite{villar2015multi,dani2008stochastic,kuleshov2014algorithms} whose aims are to balance the correct identification of the best treatment (exploration) and the effectiveness of the treatment during the trials (exploitation). The classical sequential clinical trials containing $d$ different drugs can be modeled as an independent $d$-armed bandit problem. However, this may be impractical as treatments may utilize multiple drugs simultaneously. A more feasible and general way is to model the treatment decision as a set of mixed drugs instead. An arm corresponds to a mixed drugs treatment with specific dosages of each kind. The reward at each time step is the curative efficacy after applying the mixed drug treatment to a patient at each trial. 

However, in some scenarios, the decision maker is more interested in some criterion other than maximizing the cumulative reward in the standard linear stochastic bandit model. One example is a discrimination-aware movie recommendation system. To avoid racially discriminatory recommendations, a user's race should not play any role in the recommendation system. However, the observed reward (number of clicks) may be biased by racial factors. A black user may have a history of following a particular black actor, but it does not mean that he or she should always be recommended movies with black actors.  
For example, Netflix last year angered black subscribers with targeted posters containing black actors no matter how minor their roles in the film are \cite{Black}. In principle, to avoid discrimination, protected or sensitive attributes such as race or gender should not be included in recommendation algorithms. Nonetheless, discarding such attributes directly and modeling the reward as a linear function of the other unprotected attributes may introduce system bias during the learning process. In another example, some clinical trials seek to maximize the curative effect on one disease of a mixed drugs treatment, while at the same time the patients who have this disease may concurrently have another disease. These patients need to take other drugs that have a positive or negative impact on the targeted disease. One typical example is the treatment of hypertension in chronic kidney disease. Hypertension is present in more than $80\%$ of the patients with chronic kidney disease, and drugs like ACEi and ARB targeted for reducing proteinuria may also have an effect on blood pressure\cite{toto2005treatment}. To study the effect of a mixed drug treatment targeted for controlling hypertension, the decision maker is supposed to discount the impact of drugs like ACEi and ARB in the decision process.  

In this paper, we propose a linear stochastic bandit formulation that maximizes the (expected) cumulative reward over a subspace of decision attributes, based on the reward observed for the full space. Specifically, each arm is projected orthogonally onto a target subspace $\bbU$. The reward is then decomposed into two components, one of which is due to $\bbU$, and the other is due to $\bbU^\perp$. We call the first component the \emph{projection reward} and the second component the \emph{corruption}. We develop a strategy that achieves\footnote{For non-negative functions $f(n)$ and $g(n)$, we write $f(n)=O(g(n))$ if $\limsup_{n\to\infty} f(n)/g(n)<\infty$, $f(n) = o(g(n))$ if $\lim_{n\to\infty} f(n)/g(n)=0$, and $f(n) = \Theta(g(n))$ if $0 < \liminf_{n\to\infty} f(n)/g(n) \leq \limsup_{n\to\infty} f(n)/g(n)<\infty$.} $O(|\bbD|\log n)$ cumulative projection regret when there are finitely many $|\bbD|$ arms and where $n$ is the number of time steps. In the case where the arms are drawn from an infinite compact set, we achieve a cumulative projection regret of $O(n^{2/3}(\log{n})^{1/2})$. Here, the projection regret at each time step is defined as the difference between the current expected projection reward of making one decision and the oracle best expected projection reward. This algorithm is based on the $\epsilon_t$-greedy policy \cite{auer2002finite}, which is a simple and well-known algorithm for the standard finite multi-armed bandit problem. 

\subsection{Related Work}

Two categories of work are related to our problem of interest: 1)  the linear stochastic bandit model; and 2) the multi-objective Pareto bandit problem.

In the linear stochastic bandit model, the decision maker predicts the reward of an arm in $\Real^d$ based on the given context vector of this decision. In \cite{AuerConfidence}, the author proposed an algorithm based on least squares estimation and high probability confidence bounds, and showed that it has $O(\sqrt{dn}\log^{3/2}(n|\bbD|))$ regret upper bound for the case of finitely many decisions, where $|\bbD|$ is the number of arms. The work \cite{dani2008stochastic} extended \cite{AuerConfidence}  to the problem with an arbitrary compact set of arms and presented a policy with $O(d\sqrt{n}\log^{3/2}n)$ regret upper bound. References \cite{abbasi2011improved,rusmevichientong2010linearly} further improved the regret upper bound using smaller confidence sets established using martingale techniques. In \cite{wang2016learning}, the authors proposed a linear bandit formulation with hidden features where the reward of choosing one decision is the sum of two components, one of which is a linear function of the observable features, and the other is a linear function of the unobservable features. They applied a upper confidence bound (UCB)-type linear bandit policy with a coordinate descent \cite{uschmajew2012local,friedman2010regularization} algorithm in which estimating the hidden features and the unknown coefficients jointly over time is achieved. Our work is different from the above-mentioned papers in that we seek to maximize only the cumulative projection reward rather than the reward observed for the full space. Furthermore, the projection reward is formed by projecting both the decision arm and context vectors into a subspace.

In the stochastic multi-objective multi-armed bandit (MOMAB) problem \cite{drugan2014pareto,drugan2013designing,yahyaa2014annealing,yahyaa2014scalarized}, the reward of making one decision is a vector rather than the scalar reward in the standard multi-armed bandit problem. Because of the possible conflicting objectives, a set of Pareto optimal arms \cite{drugan2013designing} \cite{zitzler2002performance} is considered in the MOMAB problem instead of a single best arm. Scalarization techniques can be used to transform a MOMAB problem to a single objective MAB, and an arm belonging to the Pareto optimal set is regarded as a best arm in a particular scalarization function. In our work, we consider the decomposition of the reward where the decomposition is similar to a two-objective MAB applied with a linear scalarization function \cite{drugan2013designing}. However, the difference between our model and MOMAB with linear scalarization is that we can only observe the sum of the (desired) projection reward and (undesired) corruption at each time step rather than a reward vector consisting of the two components as in MOMAB. Furthermore, our objective is to minimize the cumulative projection regret rather than minimizing the three types of regret defined in \cite{drugan2013designing}. 

\subsection{Our Contributions}
In this paper, we study the orthogonal projection problem in linear bandits, where our aim is to maximize the expected projection reward, which is an unknown linear function of a projection of the chosen arm onto a target subspace $\bbU \subseteq \Real^d$. In the case where there are finitely many arms, we develop a strategy to achieve $O(|\bbD|\log n)$ cumulative projection regret, where $n$ is the number of time steps and $|\bbD|$ is the number of arms. In the case where the arm is chosen from an infinite compact set, our strategy achieves $O(n^{2/3}(\log{n})^{1/2})$ cumulative projection regret. 

In the linear stochastic bandit literature, the best cumulative regret upper bound for the case of a compact decision set is $O(d\sqrt{n}\, \text{polylog}(n))$ \cite{abbasi2011improved, rusmevichientong2010linearly}, where $\text{polylog}(n)$ means a polynomial in $\log n$.  If a further smoothness assumption same as that in \cite{rusmevichientong2010linearly} is made, we show that it is possible to achieve the $O(d\sqrt{n})$ projection regret upper bound in our problem formulation. However, the existence of a policy with $O(d\sqrt{n}\, \text{polylog}(n))$ cumulative projection regret for the general compact decision set remains an open question for our problem. We verify our proposed policies on both synthetic simulations and experiments on a wine quality dataset \cite{cortez2009modeling}.

The rest of this paper is organized as follows. In \cref{sec:model}, we present our system model and assumptions. In \cref{sec:strategy}, we introduce our strategies and prove that they achieve sublinear cumulative projection regret. In \cref{sec:simulation}, we present simulations and experiments to verify the performance of our strategies. \Cref{sec:conclusion} concludes the paper. 

\emph{Notations:}  
We use $E^c$ to denote the complement of the event $E$. The indicator function $\indicatore{A}(\omega)=1$ if and only if $\omega\in A$. $\bI$ is the identity matrix. For $\bx, \by\in\Real^d$, let $\bx\T$ be the transpose of $\bx$ , and $\ip{\bx}{\by}=\bx\T \by$. We use $\Vert \bx\Vert_2$ to denote the $L_2$-norm $\sqrt{\ip{\bx}{\bx}}$ and $\Vert \bx\Vert_\bA$ to denote the weighted $L_2$-norm $\sqrt{\ip{\bx}{\bA\bx}}$ for any $\bA\in \Real^{d\times d}$ is a positive definite matrix. We use  $\calU(a,b)$ to denote the continuous uniform distribution whose density function has support $[a,b]$, and $\calN(0,\vartheta^2)$ to denote the Gaussian distribution with variance $\vartheta^2$ and mean $0$. 


\section{System Model}
\label{sec:model}
Let $\bbD\subseteq \Real^d$ be a compact set of decisions or arms from which the decision maker has to choose an arm $X_t$ at each time step $t$. The observed return $r_t$ after choosing $X_t$ is given by
\begin{align}
  r_{t} = \ip{X_t}{\btheta} + \eta_t,
  \label{eq:linear_reward}
\end{align}
where $\btheta\in\Real^d$ is a fixed but unknown parameter and $\eta_t$ is a zero mean \gls{iid} random noise. 

In the standard linear bandit problem, the performance of a policy is measured by the difference between the decision maker's cumulative reward and the cumulative reward achieved by the oracle policy with knowledge of $\btheta$. Formally, the goal of the decision maker is to minimize the \emph{cumulative regret} over $n$ time steps defined by 
\begin{align}
  R(n) = \sum_{t=1}^{n} \big(\ip{X^*}{\btheta} - \E[\ip{X_t}{\btheta}]\big),
  \label{eq:cumregret}
\end{align}
where $X^* = \argmax_{\bx\in \bbD}\ip{\bx}{\btheta}$ is the optimal arm in the standard linear bandit problem.

In our orthogonal projection linear bandit model, we consider a decomposition of the observed return into two components. Let $\bbU$, whose dimension is not more than $d$, be a subspace of the Euclidean vector space $\Real^d$. Let $\bbU^{\perp}=\{\bq\in \Real^d: \ip{\bq}{\bp} = 0, \forall \bp \in \bbU\}$ be the orthogonal complement of $\bbU$. It is a standard result \cite{linearalgebra} that $\Real^d = \bbU \oplus \bbU^{\perp}$, and each $\bw\in \Real^d$ can be written uniquely as a sum $\bp+\bq$, where $\bp\in \bbU$ and $\bq\in \bbU^{\perp}$. We define the linear orthogonal projection operator as $P_\bbU:\Real^d \mapsto \bbU$ such that $P_\bbU(\bw)= \bp$. The operator $P_\bbU$ can be represented by a $d\times d$ matrix whose rank is the dimension of $\bbU$. The expected observed return can therefore be decomposed as
\begin{align}
  \ip{X_t}{\btheta} &= \ip{P_\bbU(X_t) + P_{\bbU^{\perp}}(X_t)}{P_\bbU(\btheta) + P_{\bbU^{\perp}}(\btheta)}\nonumber \\ 
  &= \ip{P_\bbU(X_t)}{P_{\bbU}(\btheta)} + \ip{P_{\bbU^{\perp}}(X_t)}{P_{\bbU^{\perp}}(\btheta)}.
  \label{eq:reward_decomp}
\end{align}
We call $\ip{P_\bbU(X_t)}{P_\bbU(\theta)}$ the \emph{projection reward} on the subspace $\bbU$, which is denoted by $r_{\bbU,t}$. The other part of the observed return, $\ip{P_{\bbU^{\perp}}(X_t)}{P_{\bbU^{\perp}}(\btheta)}$, is called the \emph{corruption}. Different from the standard linear bandit model, where the reward can be observed (with random perturbation), in our orthogonal projection linear bandit model, we do not directly observe the projection reward $r_{\bbU,t}$. Furthermore,  the \emph{cumulative projection regret} is defined as  
\begin{align}
  R_\bbU(n) = \sum_{t=1}^{n}\big( \ip{P_{\bbU}(X_{\bbU}^*(\btheta))}{P_\bbU(\btheta)} - \E[\ip{P_\bbU(X_t)}{P_\bbU(\btheta)}]\big),
  \label{eq:cumprojregret}
\end{align}
where 
\begin{align}\label{eq:XU*}
  X_{\bbU}^*(\btheta)= \argmax_{\bx\in \bbD}\ip{P_\bbU(\bx)}{P_\bbU(\btheta)}
\end{align} 
is the \emph{best projection arm}. The objective in our model is to minimize the cumulative projection regret $R_\bbU(n)$ rather than the cumulative regret $R(n)$, base on the observed returns $r_1,\ldots,r_{n-1}$.

For a concrete illustration, consider again the discrimination prevention problem in a movie recommendation system described in \cref{sec:intro}. An arm is a movie to be recommended to a specific set of users and is represented by the features of the movie. The features include the movie genre, movie length, music types used, ages and races of the actors or actresses, etc. Suppose that each arm is a $d$ dimensional vector, where some of its elements like the ages and races may lead to discriminatory recommendations and are ``protected''. To eliminate the effect of those features when performing the recommendations, the system should consider only the non-protected dimensions. In \cite{calders2013controlling}, the authors proposed to control the discrimination effect in a linear regression model, where $P_\bbU$ in their formulation can also be viewed as a projection matrix.

We use $\text{span}(\bbD)$ to denote the set of finite linear combinations of arms in $\bbD$. In the ideal case, if at each time step $t$ the return $r_t$ is available without the random noise $\eta_t$, and if $\bbD$ is specified by linear inequality constraints, the problem then degenerates to a linear programming problem that finds $
\bx\in \bbD$ to maximize
\begin{align}
  & \ip{P_\bbU(\bx)}{P_\bbU(\btheta)} \nonumber \\
  & = \ip{P_\bbU(\bx)}{\btheta} \nonumber \\
  & = \ip{P_\bbU(\bx)}{P_{\text{span}(\bbD)}(\btheta)+ P_{\text{span}(\bbD)^\perp}(\btheta)}\nonumber \\
  & = \ip{P_\bbU(\bx)}{P_{\text{span}(\bbD)}(\btheta)}+ \ip{P_\bbU(\bx)}{P_{\text{span}(\bbD)^\perp}(\btheta)}.
  \label{eq:linear_degenerate}
\end{align}
However, since $P_{\text{span}(\bbD)^\perp}(\btheta)$ is unobservable even in the ideal case, and $\argmax_{\bx\in \bbD}$ $\ip{P_\bbU(\bx)}{P_{\text{span}(\bbD)}(\btheta)}$ is not equal to $X_{\bbU}^*(\btheta)$ in \cref{eq:XU*} when $\ip{P_\bbU(\bx)}{P_{\text{span}(\bbD)^\perp}(\btheta)}\ne 0$, a linear cumulative projection regret is therefore inevitable in the worst case. In order to get a sublinear cumulative regret, further assumptions are required. 

A possible assumption is $\ip{P_\bbU(\bx)}{P_{\text{span}(\bbD)^\perp}(\btheta)}= 0$ for all $\bx$ and $\btheta$. This is equivalent to saying $\bbU\subseteq \text{span}(\bbD)$. To see this, note that this assumption means that $P_{\text{span}(\bbD)^\perp}(\btheta)\in \text{kernel}(P_\bbU) = P_\bbU^{\perp} = P_{\bbU^\perp}$. It follows that $\text{span}(\bbD)^\perp \subseteq \bbU^{\perp}$, which is equivalent to $\bbU \subseteq \text{span}(\bbD)$. We make the following assumption throughout this paper.

\begin{Assumption}\label{as:as1}
   We have $\bbU\subseteq \text{span}(\bbD)$, the dimension of $\text{span}(\bbD)$ is $k$, where $\dim(\bbU) \le k\le d$, and $\|P_{\bbU}(\btheta)\|_2>0$. Furthermore, the decision maker has access to $k$ linearly independent arms in  $\bbD$.
\end{Assumption}

We use $\bbD_k \subseteq \bbD$ to denote the set containing the $k$ linearly independent arms in  \cref{as:as1}. The condition $\|P_{\bbU}(\btheta)\|_2>0$ guarantees $\btheta\notin \bbU^{\perp}$.  We also make another assumption which is standard in the literature for the linear bandit problem.

\begin{Assumption}\label{as:as2}
 We have $\Vert \btheta \Vert_2\le S$, and $\max_{\bx\in \bbD} \Vert \bx\Vert_2 \le Z$, where $S$ and $Z$ are positive finite constants. The random variables $\eta_t$ are drawn \gls{iid} from a zero-mean sub-Gaussian distribution with parameter $\vartheta$, i.e., $\E[e^{\xi\eta_t}]\le e^{\xi^2\vartheta^2/2}$ for all $\xi\in \Real$.
\end{Assumption}

\section{Strategies and Regret Analysis}\label{sec:strategy}
\subsection{Greedy Projection Strategy}\label{ssec:GENTRY}

The decision set $\bbD$ may contain finitely or infinitely many arms. In this section, we present a strategy with two settings, the first is applicable for the finite-arm case, and the other is applicable for the infinite-arm case. In the following, we use $\hat{\btheta}_t$ to denote the $L_2$ regularized least square estimate of $\btheta$ with parameter $\lambda> 0$, after the decision making at time step $t$:
\begin{align}
  \hat{\btheta}_t =\bV_t^{-1}\sum_{i = 1}^{t} r_iX_i,
    \label{eq:estimate}
\end{align}
where $\bV_t=\lambda \bI + \sum_{i=1}^{t} X_iX_i\T$. We propose a Greedy projEctioN sTRategY (GENTRY) that accounts for the corruption in the observed return to achieve sublinear cumulative projection regret. GENTRY is based on the $\epsilon_t$-greedy policy \cite{auer2002finite}, which is a simple and well-known algorithm for the standard finite multi-armed bandit problem. At each time step $t$, the $\epsilon_t$-greedy policy chooses with probability $1-\epsilon_t$ the arm with the highest empirical average reward, and with probability $\epsilon_t$ a random arm. Since in our problem, we focus our attention on the projection reward, GENTRY chooses with probability $1-\epsilon_t$ the arm with the highest empirical average projection reward defined as:
\begin{align}
  \bar{r}_{t-1}(\bx) = \ip{P_\bbU(\bx)}{P_\bbU(\hat{\btheta}_{t-1})},
  \label{eq:finite_empirical}
\end{align}
for each arm $\bx\in \bbD$. Another difference is, at each time step $t$, GENTRY chooses with probability $\epsilon_t$ a random arm from $\bbD_k$ rather than $\bbD$.  We define two different settings of $\epsilon_t$ as follows:
\begin{align}
  \epsilon_t^f = \min\left\{1,\frac{\alpha  k}{t}\right\} \text{ and } \epsilon_t^i = \min\left\{1,\frac{\alpha  k}{t^{1/3}}\right\}, 
  \label{eq:epsilont_i}
\end{align}
to handle the finite- and infinite-arm cases, respectively. The hyperparameter $\alpha>0$ is a fixed positive constant. The GENTRY strategy is summarized in \cref{algo:GENTRY}.

\begin{algorithm}[!htb]
\caption{GENTRY}
\begin{algorithmic}[1]
  \label{algo:GENTRY}
  \REQUIRE Set $\epsilon_t$ to $\epsilon_t^f$ or $\epsilon_t^i$ according to whether the problem is finite-arm or infinite-arm. Set hyperparameters $\alpha>0$ and $\lambda>0$. 
\STATE Set $t=1$.
\LOOP
\STATE Set $\epsilon_t$ using \cref{eq:epsilont_i}. 
\STATE With probability $1-\epsilon_t$, choose $\argmax_{\bx\in \bbD} \bar{r}_{t-1}(\bx)$ and with probability $\epsilon_t$ choose a random arm from $\bbD_k$.
\STATE Update $\bar{r}(\bx_t)$ using \cref{eq:finite_empirical}. 
\STATE Set $t = t + 1$. \\
\ENDLOOP
\end{algorithmic} 
\end{algorithm}
In the following theorems, we give a sufficient condition on the hyperparameter $\alpha$ for the strategy to achieve $O(\log n)$ for the finite-arm case. We also show that our infinite-arm strategy achieves $O(n^{2/3}(\log n)^{1/2})$ regret for the infinite-arm case. The empirical impact of $\alpha$ is further studied in \cref{sec:simulation}. Parameter $\lambda$ can be set as a moderate value like $Z^2$.

\subsection{Regret Analysis}\label{ssec:regret}
\subsubsection{Finitely Many Arms}\label{sssec:finite}
\begin{Theorem}\label{theorem:GENTRY_finite}
Suppose \cref{as:as1,as:as2} hold, $\bbD$ contains finitely many arms, 
\begin{align}
\alpha >  \max\braces*{\frac{24dZ^2\vartheta^2}{\min_{\bx\ne X_\bbU^*} \Delta_\bx^2\delta_{\bbD_k}},10}
\end{align}
where  $\Delta_\bx = \ip{P_{\bbU}(X_{\bbU}^*(\btheta))- P_\bbU(\bx)}{P_\bbU(\btheta)}$, and $\delta_{\bbD_k}$ is a constant that depends on $\bbD_k$. Then GENTRY has cumulative projection regret of order $O(|\bbD|\log n)$, where $n$ is the number of time steps and $|\bbD|$ is the number of arms.
\end{Theorem}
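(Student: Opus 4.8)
The plan is to bound the cumulative projection regret by separately accounting for the two mechanisms by which GENTRY pays regret: the forced exploration steps (chosen with probability $\epsilon_t^f$) and the exploitation steps (chosen with probability $1-\epsilon_t^f$ on which the empirically best arm differs from $X_\bbU^*$). Since $\epsilon_t^f = \min\{1,\alpha k/t\}$, the expected number of exploration steps up to time $n$ is $\sum_{t=1}^n \epsilon_t^f \le \alpha k (1 + \log n) = O(\log n)$; each such step contributes at most $\max_\bx \Delta_\bx \le 2Z\norm{P_\bbU(\btheta)}_2$ to the regret, so the total exploration regret is $O(\log n)$. The real work is showing the exploitation regret is also $O(\log n)$, which amounts to controlling the probability that the greedy choice is suboptimal.

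The key step is to show that on exploitation steps, $\P(\bar r_{t-1}(\bx) \ge \bar r_{t-1}(X_\bbU^*))$ decays fast enough for each fixed suboptimal arm $\bx$. A suboptimal $\bx$ is selected only if $\ip{P_\bbU(\bx) - P_\bbU(X_\bbU^*)}{P_\bbU(\hat\btheta_{t-1})} \ge 0$, i.e. only if the estimation error $\hat\btheta_{t-1} - \btheta$ is large in the direction $P_\bbU(\bx - X_\bbU^*)$; specifically this forces $\ip{P_\bbU(\bx - X_\bbU^*)}{P_\bbU(\hat\btheta_{t-1} - \btheta)} \ge \Delta_\bx$. First I would write $\hat\btheta_{t-1} - \btheta = \bV_{t-1}^{-1}(\sum_i \eta_i X_i - \lambda\btheta)$ and invoke the standard self-normalized martingale concentration bound (the confidence-set machinery of \cite{abbasi2011improved}) to control $\norm{\hat\btheta_{t-1} - \btheta}_{\bV_{t-1}}$ with high probability. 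The factor $\delta_{\bbD_k}$ and the forced exploration from $\bbD_k$ are what guarantee the minimum eigenvalue of $\bV_{t-1}$ grows: because on each exploration step an arm is drawn uniformly from the $k$ linearly independent arms $\bbD_k$, after $\Theta(\log t)$ explorations the design matrix $\bV_{t-1}$ has $\lambda_{\min}(\bV_{t-1}) \ge c\, t$ with high probability for a constant $c$ depending on $\delta_{\bbD_k}$. This lower bound on the eigenvalue converts the self-normalized bound into an ordinary bound $\norm{\hat\btheta_{t-1} - \btheta}_2 = O(\sqrt{d\log t /t})$.

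Combining these, a suboptimal selection at time $t$ requires $\Delta_\bx \le \norm{P_\bbU(\bx - X_\bbU^*)}_2 \norm{\hat\btheta_{t-1}-\btheta}_2 = O(Z\sqrt{d\log t/t})$, and the stated lower bound $\alpha > 24 dZ^2\vartheta^2/(\min_\bx \Delta_\bx^2 \delta_{\bbD_k})$ is precisely the threshold ensuring that, once $t$ is large enough relative to $\alpha$, the right-hand side falls below $\Delta_\bx$, so the selection probability is summably small. Summing the per-arm failure probabilities over all $|\bbD|$ arms and all $n$ steps yields the $O(|\bbD|\log n)$ bound, with the $|\bbD|$ factor entering through the union bound over suboptimal arms.

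\textbf{The main obstacle} I expect is the eigenvalue growth argument: rigorously showing $\lambda_{\min}(\bV_{t-1}) = \Omega(t)$ with high probability. This requires a matrix-concentration argument (e.g. a matrix Chernoff bound on the sum of rank-one terms $X_i X_i\T$ contributed by the forced-exploration rounds) to show the empirical second-moment matrix of the uniform draws from $\bbD_k$ concentrates around its expectation, whose least eigenvalue is the constant $\delta_{\bbD_k}$. Handling the fact that exploration and exploitation rounds are interleaved and that the number of exploration rounds up to time $t$ is itself random (binomially distributed with mean $\sum_{s\le t}\epsilon_s^f$) is the delicate part, and is where the two factors of $k$ and the explicit constant $24$ in the threshold for $\alpha$ come from.
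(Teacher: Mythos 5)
Your high-level architecture matches the paper's: decompose the regret into forced-exploration and greedy-exploitation contributions, bound the probability of a greedy mistake via the self-normalized confidence bound of \cite{abbasi2011improved}, and convert that bound into something usable through a lower bound on the minimum eigenvalue of $\bV_{t-1}$ restricted to $\text{span}(\bbD)$, guaranteed by the uniform draws from $\bbD_k$. However, the quantitative core of your argument is wrong. Under the schedule $\epsilon_t^f = \min\{1,\alpha k/t\}$, the expected number of exploration rounds up to time $t$ is $\Theta(\log t)$, and each round adds a rank-one matrix of spectral norm at most $Z^2$, while the greedy rounds may all pull a single arm and hence contribute in only one direction. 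So the best one can guarantee is $\lambda_{t,\text{min}}^{\bbD_k} = \Omega(\log t)$ (with failure probability roughly $t^{-\alpha/10}$ via Bernstein), not your claimed $\lambda_{\min}(\bV_{t-1}) \ge c\,t$; indeed your own count of $\Theta(\log t)$ explorations is incompatible with an $\Omega(t)$ eigenvalue. Consequently $\norm{\hat{\btheta}_{t-1}-\btheta}_2$ does \emph{not} decay like $O(\sqrt{d\log t/t})$: the directional error $\beta_{t-1}\norm{P_\bbU(\bx)}_{\bV_{t-1}^{-1}}$ merely stays bounded by a non-vanishing constant of order $\vartheta Z\sqrt{6d/(\alpha\delta_{\bbD_k})}$, because the numerator $\beta_{t-1}^2 = \Theta(d\vartheta^2\log t)$ and the eigenvalue lower bound $\Theta(\alpha\delta_{\bbD_k}\log t)$ grow at the same $\log t$ rate.

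This changes what the condition on $\alpha$ is for, and your reading of it is the symptom of the gap. It is not "the threshold ensuring that, once $t$ is large enough relative to $\alpha$, the right-hand side falls below $\Delta_\bx$" --- if the error truly decayed like $\sqrt{\log t/t}$, any $\alpha$ yielding summable failure probabilities would suffice and the explicit constant would be irrelevant. Rather, $\alpha > 24dZ^2\vartheta^2/(\min_{\bx\ne X_\bbU^*}\Delta_\bx^2\,\delta_{\bbD_k})$ is exactly what makes the \emph{constant} error level $\vartheta Z\sqrt{6d/(\alpha\delta_{\bbD_k})}$ fall below $\Delta_\bx/2$ for large $t$, and the second part of the threshold, $\alpha>10$, is what makes the Bernstein failure probability of the eigenvalue event $o(t^{-1})$, hence summable to $O(\log n)$ per arm. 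Finally, your anticipated "main obstacle" (matrix concentration for the interleaved random design) is a non-issue: since exploration draws come from the fixed finite set $\bbD_k$, one has deterministically $\bV_t \succeq \lambda\bI + \bigl(\min_{\bx\in\bbD_k}N_t(\bx)\bigr)\sum_{\bx\in\bbD_k}\bx\bx\T$, whose least eigenvalue on $\text{span}(\bbD)$ is at least $\delta_{\bbD_k}\min_{\bx\in\bbD_k}N_t(\bx)$; so only scalar Bernstein bounds on the per-arm exploration counts $\tilde{N}_t(\bx)$, as in \cite{auer2002finite}, are needed. With these repairs your outline becomes essentially the paper's proof.
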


\cref{theorem:GENTRY_finite} shows that if we choose $\alpha$ to be sufficiently large, GENTRY achieves order optimal regret. 

\begin{IEEEproof} \label{proof1}
For any arm $\bx\in \bbD$, let the random variable $N_t(\bx)$ denote the total number of time steps within the first $t$ time steps that the arm $\bx$ was chosen, i.e., $N_n(\bx) = \sum_{t=1}^{n} \indicator{X_t = \bx}$. Since 
\begin{align}
  \E[ R_\bbU(n)]& =\E \sum_{t=1}^{n} \bigg(\ip{P_{\bbU}(X_{\bbU}^*(\btheta))}{P_\bbU(\btheta)} - \ip{P_\bbU(X_t)}{P_\bbU(\btheta)}\bigg)\nonumber \\
   &= \sum_{\bx\in \bbD} \E[N_n(\bx)]\Delta_\bx \nonumber \\
  &= \sum_{\bx\in \bbD}\Delta_\bx \sum_{t=1}^{n}\P(X_t = \bx), 
  \label{eq:decompose_R}
\end{align}
where $\Delta_\bx = \ip{P_{\bbU}(X_{\bbU}^*(\btheta))- P_\bbU(\bx)}{P_\bbU(\btheta)}$ is the projection regret of arm $\bx$, it is sufficient to show that the probability $\P(X_t = \bx)= O(t^{-1})$ for all suboptimal $\bx\in \bbD\backslash\{X_\bbU^*\}$. We have
\begin{align}
  \P(X_t = \bx)  \le  \frac{\epsilon_t}{ k} + (1-\epsilon_t)\P(\bar{r}_{t-1}(\bx) \ge \bar{r}_{t-1}(X_{\bbU}^*(\btheta))),
  \label{eq:PX_t}
\end{align}
and 
\begin{align}
  &\P(\bar{r}_{t-1}(\bx) \ge \bar{r}_{t-1}(X_{\bbU}^*(\btheta)))\nn
	&= P(\bar{r}_{t-1}(\bx) \ge \bar{r}_{t-1}(X_{\bbU}^*(\btheta)), \bar{r}_{t-1}(\bx) \ge \ip{P_\bbU(\bx)}{P_\bbU(\btheta)} + \frac{\Delta_\bx}{2}) \nn
	&\hspace{0.3cm}+ \P(\bar{r}_{t-1}(\bx) \ge \bar{r}_{t-1}(X_{\bbU}^*(\btheta)), \bar{r}_{t-1}(\bx) < \ip{P_\bbU(\bx)}{P_\bbU(\btheta)} + \frac{\Delta_\bx}{2}) \nn
  & \le \P(\bar{r}_{t-1}(\bx) \ge \ip{P_\bbU(\bx)}{P_\bbU(\btheta)} + \frac{\Delta_\bx}{2}) \nonumber \\ 
	&\hspace{0.3cm}+ \P(\bar{r}_{t-1}(X_{\bbU}^*(\btheta)) \le \ip{P_\bbU(X_{\bbU}^*(\btheta))}{P_\bbU(\btheta)}-\frac{\Delta_\bx}{2}). 
  \label{eq:Pr_t}
\end{align}
We next bound the first term on the right-hand side of \cref{eq:Pr_t}. The analysis for the second term is similar.

From the proof of Theorem $2$ in \cite{abbasi2011improved}, we have that with probability at least $1-\delta$, for all $t\ge 0$ and $\bx\in \Real^d$, 
\begin{align}
  \left\vert\ip{\bx}{\hat{\btheta}_t - \btheta}\right\vert \le \|\bx\|_{\bV_t^{-1}}\left(\vartheta\sqrt{d\log\left( \frac{1+t}{\delta}\right)}+\lambda^{1/2}S\right).
  \label{eq:concentration}
\end{align}

Let $\beta_t = \vartheta \sqrt{3d \log (1+t)}+\lambda^{1/2}S$. Since $\ip{P_\bbU(\bx)}{P_\bbU(\btheta)} = \ip{P_\bbU(\bx)}{\btheta}$ for the orthogonal projection operator $P_\bbU$, by setting $\delta = \ofrac{t^2}$, we have 

\begin{align*}
\vartheta\sqrt{d\log\left( \frac{1+t}{\delta}\right)}+\lambda^{1/2}S 
&= \vartheta\sqrt{d\log\left( t^2(t+1)\right)}+\lambda^{1/2}S \\
&< \beta_t,
\end{align*}
so that for all $\bx\in \bbD\backslash\{X_\bbU^*\}$, it follows from \cref{eq:concentration} that
\begin{align}
& \P(\bar{r}_t(\bx)-\ip{P_\bbU(\bx)}{P_\bbU(\btheta)} >\beta_t \|P_\bbU(\bx)\|_{\bV_t^{-1}}) \nonumber \\
&\le \mathbb{P} \bigg(\ip{P_\bbU(\bx)} {\hat{\btheta}_t} - \ip{P_\bbU(\bx)}{\btheta}>\nonumber \\
&\hspace{2cm}  \left(\vartheta \sqrt{d\log \left( t^2(t+1)\right)} +\lambda^{1/2}S \right)\|P_\bbU(\bx)\|_{\bV_t^{-1}}\bigg).\nonumber \\
& \le \ofrac{t^2}.\label{eq:concentration2}
\end{align}
We then have
\begin{align}
 &\P(\bar{r}_{t}(\bx) \ge \ip{P_\bbU(\bx)}{P_\bbU(\btheta)}+\frac{\Delta_\bx}{2}) \nonumber \\
 &= \P(\bar{r}_{t}(\bx) \ge \ip{P_\bbU(\bx)}{P_\bbU(\btheta)}+\frac{\Delta_\bx}{2}){\frac{\Delta_\bx}{2}> \beta_t\|P_\bbU(\bx)\|_{\bV_t^{-1}}}\nonumber \\
 & \hspace{1cm} \cdot \P(\frac{\Delta_\bx}{2}> \beta_t\|P_\bbU(\bx)\|_{\bV_t^{-1}})\nonumber \\
 & \hspace{0.0cm} + \P(\bar{r}_{t}(\bx) \ge \ip{P_\bbU(\bx)}{P_\bbU(\btheta)}+\frac{\Delta_\bx}{2}){ \frac{\Delta_\bx}{2}\le \beta_t\|P_\bbU(\bx)\|_{\bV_t^{-1}}} \nonumber \\
 & \hspace{1cm} \cdot \P(\frac{\Delta_\bx}{2}\le \beta_t\|P_\bbU(\bx)\|_{\bV_t^{-1}})\nonumber \\
 & \le \ofrac{t^2} +  \P(\frac{\Delta_\bx}{2}\le \beta_t\|P_\bbU(\bx)\|_{\bV_t^{-1}}).\label{eq:Pr_t_1}
\end{align}

We next show that when the hyperparameter $\alpha$ is sufficiently large, 
\begin{align*}
  \P(\frac{\Delta_\bx}{2}\le \beta_t\|P_\bbU(\bx)\|_{\bV_t^{-1}})= O(t^{-1}).
 \end{align*}


Since
\begin{align*}
  \bV_t &=  \lambda \bI + \sum_{\bx\in \bbD} N_t(\bx)\bx\bx\T  \\
  &=  \lambda P_{\text{span}(\bbD)^\perp}(\bI) +\lambda P_{\text{span}(\bbD)}(\bI)+ \sum_{\bx\in \bbD} N_t(\bx)\bx\bx\T,
\end{align*}
it can be shown by induction that the eigenvectors of $\bV_t$ can be divided into two groups, one in which all the eigenvectors are in $\text{span}(\bbD)$, and another in which all the eigenvectors are in $\text{span}(\bbD)^\perp$. Let $\lambda^\bbD_{t,\text{min}}$ be the smallest eigenvalue of $\bV_t$ in $\text{span}(\bbD)$ and $\lambda^{\bbD_k}_{t,\text{min}}$ be the smallest eigenvalue of $(\lambda \bI+\sum_{\bx\in \bbD_k} N_t(\bx) \bx\bx\T)$ in $\text{span}(\bbD)$. If we define
\begin{align*}
  \by_1=\argmin_{\by\in \text{span}(\bbD), \|\by\|^2_2=1}\ip{\by}{\bV_t\by},
 \end{align*}
 we then have 
\begin{align}
  \lambda^\bbD_{t,\text{min}} &= \by_1\T\bV_t\by_1\nonumber \\
  &\ge \by_1\T(\lambda \bI+ \sum_{\bx\in \bbD_k}N_t(\bx)\bx\bx\T)\by_1  \nonumber \\
  &\ge \min_{\by\in\text{span}(\bbD),\|\by\|^2_2=1}\by\T(\lambda \bI + \sum_{\bx\in \bbD_k} N_t(\bx)\bx\bx\T)\by\nonumber \\
  &= \lambda_{t,\text{min}}^{\bbD_k}\
 \label{eq:eiginequality}
\end{align}
Because  $P_\bbU(\bx) \in \text{span}(\bbD)$, using \cref{eq:eiginequality}, we obtain
\begin{align}
  &\P(\frac{\Delta_\bx}{2} \le \beta_t\|P_\bbU(\bx)\|_{\bV_t^{-1}}) \nonumber \\
  &\le \P(\left(\frac{\Delta_\bx}{2\beta_t}\right)^2 \le \frac{\|P_\bbU(\bx)\|^2_2}{\lambda^\bbD_{t,\text{min}}})\nonumber \\
&=  \P(\lambda_{t,\text{min}}^\bbD \le \frac{4\beta_t^2\|P_\bbU(\bx)\|^2_2}{\Delta_\bx^2})\nonumber \\
&\le \P(\lambda_{t,\text{min}}^{\bbD_k} \le \frac{4\beta_t^2 Z^2}{\min_{\bx\ne X_\bbU^*} \Delta_\bx^2}) 
  \label{eq:mineig}
\end{align}

Consider the function
\begin{align}
  f: \{\by:\by\in\text{span}(\bbD),\|\by\|^2_2=1\}\mapsto \by\T(\sum_{\bx\in \bbD_k} \bx\bx\T)\by.
\end{align}
We have $f(\by)>0$ since $\bbD_k$ contains $k$ linearly independent arms.
We also have $\delta_{\bbD_k} := \inf f(\by) > 0$ since $f(\by)$ is a continuous function on a compact set.
Define the event
\begin{align*}
  \left\{ \delta_{\bbD_k}N_t(\bx) > \frac{4\beta_t^2Z^2}{\min_{\bx\ne X_\bbU^*} \Delta_\bx^2}, \text{ for all } \bx\in \bbD_k\right\}.
\end{align*}
Under this event, using the last equality in \cref{eq:eiginequality}, we obtain $\lambda_{t,\text{min}}^{\bbD_k} >\frac{4\beta_t^2Z^2}{\min_{\bx\ne X_\bbU^*}\Delta_\bx^2}$. Therefore, we have
\begin{align}
  & \P(\lambda_{t,\text{min}}^{\bbD_k}\le \frac{4\beta_t^2Z^2}{\min_{\bx\ne X_\bbU^*} \Delta_\bx^2})\nonumber \\
 & \le \sum_{\bx \in \bbD_k}\P(N_t(\bx)\le  \frac{4\beta_t^2Z^2}{\min_{\bx\ne X_\bbU^*} \Delta_\bx^2 \delta_{\bbD_k}})\nonumber \\
 & \le \sum_{\bx \in \bbD_k}\P(\tilde{N}_t(\bx)\le  \frac{4\beta_t^2Z^2}{\min_{\bx\ne X_\bbU^*} \Delta_\bx^2 \delta_{\bbD_k}}),\label{eq:eigbound}
\end{align}
where $\tilde{N}_t(\bx) \leq N_t(\bx)$ is the number of times that arm $\bx\in \bbD_k$ was chosen \emph{randomly} during the first $t$ time steps. From the proof of Theorem 3 in \cite{auer2002finite} (where Bernstein's inequality \cite{Pollard} was used), we have
\begin{align}
  \P(\tilde{N}_t(\bx) \le \ofrac{2 k}\sum_{i = 1}^{t}\epsilon_i)\le \exp{ \left(-\frac{1}{10 k}\sum_{i = 1}^{t}\epsilon_i\right)} \label{eq:bernstein}
\end{align}
For $t\ge t' = \ceil{\alpha k}$, $\epsilon_t=\epsilon_t^f  = \frac{\alpha  k}{t}$, we then obtain 
\begin{align}
\ofrac{2 k}\sum_{i = 1}^{t}\epsilon_i 
& = \ofrac{2 k} \sum_{i=1}^{t'} \epsilon_i+\ofrac{2 k} \sum_{i= t'+1}^{t}\epsilon_i \nn
& \ge \frac{\alpha}{2} + \ofrac{2 k}\int_{t'+1}^{t+1} \frac{\alpha  k}{t} \ud x \nn
&\ge \frac{\alpha}{2} + \frac{\alpha}{2}\log \frac{t+1}{\alpha k+2} \nn
&=  \frac{\alpha}{2}\log \frac{(t+1)e}{\alpha k+2}, \label{eq:sumepsilon}
\end{align}
where $e$ is Euler's number. If $\frac{\alpha}{2}\log \frac{(t+1)e}{\alpha k+2}\ge  \frac{4\beta_t^2Z^2}{\min_{\bx\ne X_\bbU^*} \Delta_\bx^2 \delta_{\bbD_k}}$, from \cref{eq:eigbound}, we have 
\begin{align}
&  \P(\lambda_{t,\text{min}}^{\bbD_k}\le \frac{4\beta_t^2Z^2}{\min_{\bx\ne X_\bbU^*} \Delta_\bx^2})\nn
&\le \sum_{\bx\in \bbD_k} \P(\tilde{N}_t(\bx)\le  \frac{4\beta_t^2Z^2}{\min_{\bx\ne X_\bbU^*} \Delta_\bx^2 \delta_{\bbD_k}}) \nonumber \\
&\le \sum_{\bx\in \bbD_k} \P(\tilde{N}_t(\bx)\le   \frac{\alpha}{2}\log \frac{(t+1)e}{\alpha k+2}) \nn
& \le \sum_{\bx\in \bbD_k}  \P(\tilde{N}_t(\bx)\le  \ofrac{2 k}\sum_{i = 1}^{t}\epsilon_i)\nn
& \le \sum_{\bx\in \bbD_k}  \exp\left(-\ofrac{10 k}\sum_{i = 1}^{t}\epsilon_i\right) \nn
& \le k \left(\frac{\alpha k+2}{(t+1)e}\right)^{\frac{\alpha}{10}},\label{eq:Ot}
\end{align}
where the third inequality follows from \cref{eq:sumepsilon}, the penultimate inequality follows from \cref{eq:bernstein}, and the last inequality from \cref{eq:sumepsilon}.

Recall that $\beta_t = \vartheta \sqrt{3d \log (t+1)}+\lambda^{1/2}S$. Therefore, if $\alpha > \max\{\frac{24dZ^2\vartheta^2}{\min_{\bx\ne X_\bbU^*} \Delta_\bx^2\delta_{\bbD_k}},10\}$, when $t$ is sufficiently large, the condition $\frac{\alpha}{2}\log \frac{(t+1)e}{\alpha k+2}\ge  \frac{4\beta_t^2Z^2}{\min_{\bx\ne X_\bbU^*} \Delta_\bx^2 \delta_{\bbD_k}}$ is satisfied. We have
\begin{align}
 &\P(\bar{r}_{t}(\bx) \ge \ip{ P_\bbU(\bx)}{P_\bbU(\btheta)}+\frac{\Delta_\bx}{2}) \nn
 & \le \ofrac{t^2} +  \P(\frac{\Delta_\bx}{2}\le \beta_t\|P_\bbU(\bx)\|_{\bV_t^{-1}})\nn
 &\le \ofrac{t^2}+ \P(\lambda_{t,\text{min}}^{\bbD_k} \le \frac{4\beta_t^2 Z^2}{\min_{\bx\ne X_\bbU^*} \Delta_\bx^2})\nn
 & \le \ofrac{t^2} + k \left(\frac{\alpha k+2}{(t+1)e}\right)^{\frac{\alpha}{10}}\nn
 & = o(t^{-1}),
 \label{eq:Pr_t_2}
\end{align}
where the first inequality follows from \cref{eq:Pr_t_1}, the second inequality from \cref{eq:mineig} and the last inequality from \cref{eq:Ot}.

A similar argument shows that $\P(\bar{r}_{t}(\bx) \le \ip{P_\bbU(X_\bbU^*)}{P_\bbU(\btheta)}-\frac{\Delta_\bx}{2})= o(t^{-1})$.
Then, using \cref{eq:PX_t,eq:Pr_t}, we have
\begin{align}
  \P(X_t = \bx) & \le \frac{\alpha}{t}+o(t^{-1}) = O(t^{-1}),
  \label{eq:PX_t_2}
\end{align}
From \cref{eq:decompose_R}, we conclude that 
\begin{align*}
\E[R_\bbU(n)] = O(|\bbD|\log n).
\end{align*}
The proof of \cref{theorem:GENTRY_finite} is now complete. 
\end{IEEEproof}

\subsubsection{Infinitely Many Arms}\label{sssec:infinite}

\begin{Theorem}\label{theorem:GENTRY_infinite}
Suppose \cref{as:as1,as:as2} hold, and $\bbD$ is a compact set containing infinitely many arms. Then, GENTRY has cumulative projection regret of order $O\left(\max\left\{k,\sqrt{d}\right\}n^{2/3}(\log n)^{1/2}\right)$, where $n$ is the number of time steps.
\end{Theorem}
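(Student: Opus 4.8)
The plan is to reuse the exploration/exploitation bookkeeping of \cref{theorem:GENTRY_finite} but to avoid the per-arm union bound that injects the $|\bbD|$ factor, which is fatal when $\bbD$ is infinite. Writing $\E[R_\bbU(n)]=\sum_{t=1}^n\E[\Delta_{X_t}]$ as in \cref{eq:cumprojregret}, with $\Delta_{X_t}=\ip{P_\bbU(X_{\bbU}^*(\btheta))-P_\bbU(X_t)}{P_\bbU(\btheta)}\ge 0$ uniformly bounded by some $\Delta_{\max}=O(ZS)$, I split each term according to whether step $t$ is an exploration step (probability $\epsilon_t^i$) or an exploitation step. The exploration contribution is at most $\Delta_{\max}\sum_{t=1}^n\epsilon_t^i$, and since $\sum_{t=1}^n \alpha k/t^{1/3}=O(k n^{2/3})$ by \cref{eq:epsilont_i}, this part is $O(k n^{2/3})$.

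For the exploitation steps I would bound the instantaneous regret directly. Because $P_\bbU$ is self-adjoint and idempotent, $\bar r_{t-1}(\bx)=\ip{P_\bbU(\bx)}{\hat{\btheta}_{t-1}}$, so applying the confidence bound \cref{eq:concentration} with $\delta=1/t^2$ (hence the inflated $\beta_{t-1}$) to both the greedy arm $X_t$ and to $X_{\bbU}^*(\btheta)$, and invoking optimality of the greedy choice, gives
\[
\Delta_{X_t}\le \beta_{t-1}\left(\norm{P_\bbU(X_t)}_{\bV_{t-1}^{-1}}+\norm{P_\bbU(X_{\bbU}^*(\btheta))}_{\bV_{t-1}^{-1}}\right)\le \frac{2\beta_{t-1}Z}{\sqrt{\lambda_{t-1,\text{min}}^{\bbD}}},
\]
where the final step reuses $P_\bbU(\bx)\in\text{span}(\bbD)$ and the block-eigenvalue structure of $\bV_t$ from \cref{eq:eiginequality} to write $\norm{P_\bbU(\bx)}_{\bV_{t-1}^{-1}}^2\le \norm{P_\bbU(\bx)}_2^2/\lambda_{t-1,\text{min}}^{\bbD}\le Z^2/\lambda_{t-1,\text{min}}^{\bbD}$.

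The crux is a high-probability lower bound $\lambda_{t,\text{min}}^{\bbD}=\Omega(t^{2/3})$. I would recycle the chain $\lambda_{t,\text{min}}^{\bbD}\ge\lambda_{t,\text{min}}^{\bbD_k}\ge\lambda+\delta_{\bbD_k}\min_{\bx\in\bbD_k}N_t(\bx)$ from \cref{eq:eiginequality} together with the Bernstein bound \cref{eq:bernstein}: with $\epsilon_t^i=\alpha k/t^{1/3}$ one has $\tfrac{1}{2k}\sum_{i=1}^t\epsilon_i=\Omega(t^{2/3})$, so every $\bx\in\bbD_k$ satisfies $\tilde N_t(\bx)=\Omega(t^{2/3})$ outside an event of probability at most $k\exp(-\Omega(t^{2/3}))$. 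On the good event (this count bound intersected with the $1-1/t^2$ event of \cref{eq:concentration}) we get $\lambda_{t-1,\text{min}}^{\bbD}=\Omega((t-1)^{2/3})$ and hence instantaneous exploitation regret $O(\beta_{t-1}/t^{1/3})=O(\sqrt{d\log t}/t^{1/3})$; on the complementary bad event I bound $\Delta_{X_t}$ crudely by $\Delta_{\max}$, and since the failure probabilities $1/t^2$ and $k\exp(-\Omega(t^{2/3}))$ are summable, the total bad-event contribution is $O(1)$.

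Summing the good-event exploitation bound gives $\sum_{t=1}^n\sqrt{d\log t}/t^{1/3}=O(\sqrt{d}\,n^{2/3}(\log n)^{1/2})$, and adding the $O(kn^{2/3})$ exploration term yields the claimed $O(\max\{k,\sqrt d\}\,n^{2/3}(\log n)^{1/2})$. The main obstacle is precisely establishing the eigenvalue growth with high enough probability: the slow decay rate $t^{-1/3}$ of $\epsilon_t^i$ is what forces $\sum_i\epsilon_i$ to grow polynomially, which simultaneously drives $\lambda_{t,\text{min}}^{\bbD}\sim t^{2/3}$ (shrinking per-step exploitation regret to $\sim t^{-1/3}$) and keeps the failure probability $\exp(-\Omega(t^{2/3}))$ summable. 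Verifying that this single exponent $1/3$ balances the exploration cost $\sum\epsilon_t^i\sim n^{2/3}$ against the exploitation cost $\sum\beta_{t-1}/t^{1/3}\sim\sqrt{d\log n}\,n^{2/3}$ is the delicate accounting that produces the stated rate.
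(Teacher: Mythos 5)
Your proposal is correct and follows essentially the same route as the paper's own proof: the same exploration/exploitation decomposition with the exploration term bounded by $2ZS\sum_t \epsilon_t^i = O(kn^{2/3})$, the same use of greedy optimality plus the concentration bound \cref{eq:concentration} with $\delta = 1/t^2$ to control the exploitation regret by $2Z\beta_{t-1}/\sqrt{\smash[b]{\lambda_{t-1,\text{min}}^{\bbD}}}$, the same Bernstein-based argument via \cref{eq:bernstein} showing $\tilde{N}_t(\bx) = \Omega(t^{2/3})$ for all $\bx\in\bbD_k$ and hence $\lambda_{t,\text{min}}^{\bbD_k} = \Omega(t^{2/3}\delta_{\bbD_k})$ outside a summable failure event, and the same final summation $\sum_t \sqrt{d\log t}\,t^{-1/3} = O(\sqrt{d}\,n^{2/3}(\log n)^{1/2})$. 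The only immaterial difference is bookkeeping: you absorb the small-eigenvalue event into a crude $\Delta_{\max}$ bound on a summable bad event, whereas the paper establishes $\E[1/\lambda_{t,\text{min}}^{\bbD_k}] = O(t^{-2/3})$ in expectation and applies Jensen's inequality in \cref{eq:bounddifference_3,eq:almostdone}.
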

\begin{IEEEproof}
The contribution to the projection regret at time step $t$ comes from two cases: 1) when $X_t$ is chosen randomly, or 2) when $X_t$ is chosen as $\argmax_{\bx\in \bbD}\bar{r}_{t-1}(\bx)$. We denote these two event as $E_t$ and $E_t^{c}$ respectively.
\begin{align}
  & \E[R_\bbU(n)] \nonumber \\
 & = \E[\sum_{t=1}^{n} \ip{P_{\bbU}(X_\bbU^*)-P_\bbU(X_t)}{P_\bbU(\btheta)}] \nonumber \\
 & = \sum_{t=1}^{n}\big( \epsilon_t \E[\ip{P_{\bbU}(X_\bbU^*) - P_\bbU(X_t)}{P_\bbU(\btheta)}]{ E_t}\nonumber \\
 & \hspace{0.3cm}+(1-\epsilon_t) \E[\ip{P_{\bbU}(X_\bbU^*) - P_\bbU(X_t)}{P_\bbU(\btheta)}]{E_t^c} \big)\nonumber \\
 & \le\sum_{t=1}^{n}\left(2ZS \epsilon_t +\E[\ip{P_{\bbU}(X_\bbU^*) - P_\bbU(X_t)}{P_\bbU(\btheta)}]{ E_t^c}\right),
  \label{eq:R_inf}
\end{align}
where the inequality follows from \cref{as:as1}, $\|P_\bbU(\bx)\|_2\le \|\bx\|_2$ for all $\bx$, and the Cauchy-Schwarz inequality.

We next derive an upper bound for the second term $\E[\ip{P_{\bbU}(X_\bbU^*) - P_\bbU(X_t)}{P_\bbU(\btheta)}]{ E_t^c}$ in the sum on the right-hand side of \cref{eq:R_inf}.

Under the event $E_t^c$, we have
\begin{align}
  & \ip{P_{\bbU}(X_\bbU^*) - P_\bbU(X_t)}{P_\bbU(\btheta)} \nonumber \\
  & = \ip{P_{\bbU}(X_\bbU^*) - P_\bbU(X_t)}{P_\bbU(\hat{\btheta}_{t-1})}\nonumber \\
  & \hspace{0.3cm} + \ip{P_{\bbU}(X_\bbU^*) - P_\bbU(X_t)}{P_\bbU(\btheta)- P_\bbU(\hat{\btheta}_{t-1})} \nonumber \\
  &\le \ip{P_{\bbU}(X_\bbU^*) - P_\bbU(X_t)}{P_\bbU(\btheta)- P_\bbU(\hat{\btheta}_{t-1})}, 
  \label{eq:regret2pro}
\end{align}
where the inequality follows from $X_t = \argmax_{\bx\in \bbD} \bar{r}_{t-1}(\bx)$.

We use the same $\beta_t$ defined in the proof of \cref{theorem:GENTRY_finite}. For any $t\ge 2$, let the event  
\begin{align}
  F_t = \left\{ \ip{P_{\bbU}(X_\bbU^*-X_t)}{P_\bbU(\btheta)}\nonumber \ge \|P_{\bbU}(X_\bbU^*-X_t)\|_{\bV_{t-1}^{-1}}\beta_{t-1} \right\}.
\end{align}
From \cref{eq:regret2pro,eq:concentration}, since $\ip{P_\bbU(\bx)}{P_\bbU(\btheta)} = \ip{P_\bbU(\bx) }{\btheta}$ for the orthogonal projection operator $P_\bbU$,  we have 
\begin{align}
  & \P(F_{t+1}) \nonumber\\
  &  \le \P\bigg( \ip{P_\bbU\left(X_\bbU^*-X_{t+1}\right)}{\btheta-\hat{\btheta}_{t}}\ge  \|P_{\bbU}(X_\bbU^*- X_{t+1})\|_{\bV_{t}^{-1}}\beta_{t}\bigg) \nonumber \\
& \le \P\bigg(  \ip{P_\bbU\left(X_\bbU^*-X_{t+1}\right)}{\btheta-\hat{\btheta}_{t}} \nonumber \\ &\hspace{0.5cm}\ge  \| P_\bbU\left(X_\bbU^*-X_{t+1}\right)\|_{\bV_{t}^{-1}}\bigg(\vartheta\sqrt{d\log\left( \frac{1+t}{t^2}\right)}+\lambda^{1/2}S\bigg)\bigg)\nonumber \\
  &\le \ofrac{t^2},\label{eq:useconcentration}
\end{align}
where the final inequality follows from the concentration inequality \cref{eq:concentration} with $\delta = \ofrac{t^2}$. We then have 
\begin{align}
  & \E[\ip{P_{\bbU}(X_\bbU^*) - P_\bbU(X_{t+1})}{P_\bbU(\btheta)}]{E_{t+1}^c}\nonumber \\
  & = \E[\ip{P_{\bbU}(X_\bbU^*-X_{t+1})}{P_\bbU(\btheta)}]{E_{t+1}^c, F_{t+1}}\P(F_{t+1})\nonumber \\
  &\hspace{0.2cm} +\E[ \ip{P_{\bbU}(X_\bbU^*-X_{t+1})}{P_\bbU(\btheta)}\indicatore{F_{t+1}^c}]{E_{t+1}^c}\nonumber \\
  & \le \frac{2ZS}{t^2}+ \E[ \|P_{\bbU}(X_\bbU^*-X_{t+1})\|_{\bV_t^{-1}}\beta_t]{E_{t+1}^c}\label{eq:bounddifference_0} \\
  & \le \frac{2ZS}{t^2}+\E [2Z\beta_t\sqrt{\frac{1}{\lambda_{t,\text{min}}^\bbD}}]{E_{t+1}^c}\label{eq:bounddifference_1}\\
& \le \frac{2ZS}{t^2} + \E[2Z\beta_t\sqrt{\frac{1}{\lambda_{t,\text{min}}^\bbD}}] \nn 
& \le \frac{2ZS}{t^2} + 2Z\beta_t\sqrt{\E[\frac{1}{\lambda_{t,\text{min}}^\bbD}]}\label{eq:bounddifference_3}\\\
& \le \frac{2ZS}{t^2} + 2Z\beta_t\sqrt{\E[\frac{1}{\lambda_{t,\text{min}}^{\bbD_k}}]},
\label{eq:bounddifference}
\end{align}
where 
\begin{itemize}
	\item \cref{eq:bounddifference_0} follows from \cref{eq:useconcentration};
	\item \cref{eq:bounddifference_1} follows because $\|P_{\bbU}(X_\bbU^*-X_t)\|^2_{\bV_t^{-1}} \le \|P_\bbU(X_\bbU^*-X_t)\|^2_2/\lambda^\bbD_{t,\text{min}}$;
	\item \cref{eq:bounddifference_3} follows from Jensen's inequality; and
	\item \cref{eq:bounddifference} follows from \cref{eq:eiginequality}.
\end{itemize}

We next show that $ \E[\frac{1}{\lambda_{t,\text{min}}^{\bbD_k}}] =O(t^{-2/3})$. We first show $\P(\lambda_{t,\text{min}}^{\bbD_k} < t^{2/3}) \le O(e^{-t^{2/3}})$. 
  Similar to what we have done in \cref{eq:eigbound}, define the following event:
\begin{align*}
\left\{ N_t(\bx) > \frac{\alpha}{2}t^{2/3}, \text{ for all } \bx\in \bbD_k\right\}.
\end{align*}
Under this event, from the last equality in \cref{eq:eiginequality}, we get $\lambda_{t,\text{min}}^{\bbD_k} >\frac{\alpha}{2}t^{2/3}\delta_{\bbD_k}$. Therefore, we have
\begin{align}
  & \P(\lambda_{t,\text{min}}^{\bbD_k}\le \frac{\alpha}{2}t^{2/3}\delta_{\bbD_k})\nonumber \\
 & \le \sum_{\bx \in \bbD_k}\P(N_t(\bx)\le   \frac{\alpha}{2}t^{2/3})\nonumber \\
 & \le \sum_{\bx \in \bbD_k}\P(\tilde{N}_t(\bx)\le  \frac{\alpha}{2}t^{2/3}),\label{eq:infeigbound}
\end{align}
where $\tilde{N}_t(\bx)$ is the number of times that arm $\bx\in \bbD_k$ was chosen \emph{randomly} during the first $t$ time steps. Recall from \cref{eq:bernstein} that 
\begin{align}
  \P(\tilde{N}_t(\bx) \le \ofrac{2 k}\sum_{i = 1}^{t}\epsilon_i)\le \exp{ \left(-\frac{1}{10 k}\sum_{i = 1}^{t}\epsilon_i\right)}
  \label{eq:temp}
\end{align}
For $t\ge t' =\ceil{( \alpha  k)^3}$, $\epsilon_t =\epsilon_t^i = \frac{\alpha  k}{t^{1/3}}$ and we obtain
\begin{align}
\ofrac{2 k}\sum_{i = 1}^{t}\epsilon_i & = \ofrac{2 k} \sum_{i=1}^{t'} \epsilon_i+\ofrac{2 k} \sum_{i= t'+1}^{t}\epsilon_i \nn
& \ge \ofrac{2 k}t' + \ofrac{2 k}\int_{t'+1}^{t+1} \frac{\alpha  k}{t^{1/3}} \ud x \nn
& \ge  \frac{\alpha^3 k^2}{2} + \frac{3\alpha}{4}t^{2/3}-\frac{3\alpha}{4}(\alpha^3 k^3+2)^{2/3}. \nn
& \ge \frac{\alpha}{2}t^{2/3},
\label{eq:infsumepsilon}
\end{align}
when $t$ is sufficiently large.
It follows from \cref{eq:infeigbound,eq:temp,eq:infsumepsilon} that
\begin{align}
  & \P(\lambda_{t,\text{min}}^{\bbD_k}\le \frac{\alpha}{2}t^{2/3}\delta_{\bbD_k})\nonumber \\
  & \le \sum_{\bx \in \bbD_k}\P(\tilde{N}_t(\bx)\le  \frac{\alpha}{2}t^{2/3}),\nonumber \\
  & \le \sum_{\bx \in \bbD_k}\P(\tilde{N}_t(\bx) \le \ofrac{2 k}\sum_{i = 1}^{t}\epsilon_i), \nonumber \\
  & \le k \exp\left(-\frac{\alpha}{10}t^{2/3}\right)
  \label{eq:infeig}
\end{align}
Now, we can conclude that when $t$ is sufficiently large,
\begin{align}
  &\E[\ofrac{\lambda_{t,\min}^{\bbD_k}}]\nonumber\\
  &=\E[\ofrac{\lambda_{t,\min}^{\bbD_k}}]{\lambda_{t,\text{min}}^{\bbD_k}> \frac{\alpha}{2}t^{2/3}\delta_{\bbD_k}}\P(\lambda_{t,\text{min}}^{\bbD_k}> \frac{\alpha}{2}t^{2/3}\delta_{\bbD_k})\nonumber \\
  & \hspace{0.3cm} +\E[\ofrac{\lambda_{t,\min}^{\bbD_k}}]{\lambda_{t,\text{min}}^{\bbD_k}\le \frac{\alpha}{2}t^{2/3}\delta_{\bbD_k}}\P(\lambda_{t,\text{min}}^{\bbD_k}\le \frac{\alpha}{2}t^{\frac{2}{3}}\delta_{\bbD_k})\nonumber \\
  &\le \frac{2t^{-2/3}}{\alpha\delta_{\bbD_k}}\P(\lambda_{t,\text{min}}^{\bbD_k}> \frac{\alpha}{2}t^{2/3}\delta_{\bbD_k})+ \frac{k}{\lambda}\exp\left(-\frac{\alpha}{10}t^{2/3}\right)\nn
  &\le  \frac{3t^{-2/3}}{\alpha\delta_{\bbD_k}}, 
  \label{eq:almostdone}
\end{align}
where the penultimate inequality follows from \cref{eq:eiginequality}.

From \cref{eq:bounddifference,eq:almostdone}, when $t$ is sufficiently large, we have
\begin{align}
  & \E[\ip{P_{\bbU}(X_\bbU^*) - P_\bbU(X_{t+1})}{P_\bbU(\btheta)}]{E_{t+1}^c}\nonumber \\
& \le \frac{2ZS}{t^2} +2Z\beta_t\sqrt{\E[\frac{1}{\lambda_{t,\text{min}}^{\bbD_k}}]}\nonumber \\ 
&\le \frac{2ZS}{t^2}+2Z\beta_t\sqrt{\frac{3}{\alpha\delta_{\bbD_k}}}t^{-\ofrac{3}}
  \label{eq:onemorestep}
\end{align}
Finally, from \cref{eq:R_inf}, we have
\begin{align}
  &\E[R_\bbU(n)]\nonumber \\
  & \le\sum_{t=1}^{n} \left(2ZS \epsilon_t +\E[\ip{P_{\bbU}(X_\bbU^*) - P_\bbU(X_t)}{P_\bbU(\btheta)}]{ E_t^c} \right)\nonumber \\
  &\le \sum_{t=1}^{n}  \left(\frac{2ZS\alpha k}{t^{1/3}}+ 2Z\beta_t\sqrt{\frac{3}{\alpha\delta_{\bbD_k}}}t^{-\ofrac{3}}\right) + C_1\nonumber \\
  &\le \int_{1}^{n} \left(\frac{C_2 k}{x^{1/3}}+ 6Z\vartheta\sqrt{\frac{d\log x}{\alpha\delta_{\bbD_k}}}x^{-\ofrac{3}}\right) \ud x+C_3\nonumber \\
& = O\left(\max\left\{k,\sqrt{d}\right\}n^{2/3}(\log n)^{1/2}\right),
  \label{eq:final}
\end{align}
where $C_1$, $C_2$ and $C_3$ are constants. The proof of \cref{theorem:GENTRY_infinite} is now complete.  
\end{IEEEproof}

\subsection{Additional Smoothness Assumption}\label{sec:smoothness}

In this subsection, we show that if an additional smoothness assumption similar to that in \cite{rusmevichientong2010linearly} is made, we can achieve better cumulative projection regret order than $O(n^{2/3}(\log n)^{1/2})$ for the infinite-arm case. 

\begin{Assumption}\label{as:as3}
There exists $J \in \Real^{+}$ such that for all $\btheta_1$, $\btheta_2\in \Real^d$, 
\begin{align}\label{eq:sbar}
&\left\|P_\bbU\left(X_\bbU^*(\btheta_1) -X_\bbU^*(\btheta_2)\right)\right\|_2 \le J\left\|\frac{P_\bbU(\btheta_1)}{\|P_\bbU(\btheta_1)\|} - \frac{P_\bbU(\btheta_1)}{\|P_\bbU(\btheta_1)\|} \right\|_2.
\end{align}
\end{Assumption}
This is to say the projection of the decision set $\bbD$ and all $\btheta \in \Real^d$ onto the subspace $\bbU$ satisfies the $\text{SBAR}(J)$ condition \cite{rusmevichientong2010linearly,polovinkin1996strongly}. For example, if the decision set $\bbD$ is a ball or an elliposid, it satisfies the condition \cref{eq:sbar}. In the proof of \cref{theorem:GENTRY_infinite}, we bound \cref{eq:regret2pro} with $O\left(\beta_t(\E[1/\lambda_{t,\text{min}}^{\bbD_k}])^{1/2}]\right) =O(t^{-1/3}\log t)$. 
With the additional \cref{as:as3}, we can improve this bound, and further get a tighter upper bound for the cumulative projection regret in the following result.

\begin{Theorem}\label{theorem:GENTRYsmooth}
Suppose \cref{as:as1,as:as2,as:as3} hold and $\bbD$ is a compact set with infinitely many arms. Then, GENTRY with $\epsilon_t = \epsilon_t^s := \min\left\{1,\frac{\alpha  k}{\sqrt{t}}\right\}$ has cumulative projection regret of order $O(d\sqrt{n})$, where $n$ is the number of time steps.
\end{Theorem}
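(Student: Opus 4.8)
The plan is to reuse the exploration/exploitation decomposition from the proof of \cref{theorem:GENTRY_infinite}, but to invoke \cref{as:as3} to sharpen the per-step exploitation bound from order $t^{-1/3}$ to order $t^{-1/2}$. As in \cref{eq:R_inf}, I would split $\E[R_\bbU(n)]$ into the cost of the random-exploration steps and the cost of the greedy steps $E_t^c$. With the new schedule $\epsilon_t^s = \min\{1,\alpha k/\sqrt{t}\}$ we have $\sum_{i=1}^t \epsilon_i^s = \Theta(k\sqrt{t})$, so the exploration cost is $\sum_{t=1}^n 2ZS\,\epsilon_t^s = O(k\sqrt{n})$. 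The entire improvement must therefore come from the greedy term, and the crucial observation is that on $E_t^c$ the chosen arm is exactly $X_t = X_\bbU^*(\hat{\btheta}_{t-1})$.

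The key step is to convert \cref{as:as3} into a bound on the estimation error. Applying \cref{eq:sbar} with $\btheta_1=\btheta$ and $\btheta_2=\hat{\btheta}_{t-1}$ gives $\|P_\bbU(X_\bbU^*(\btheta)-X_t)\|_2 \le J\,\bigl\|P_\bbU(\btheta)/\|P_\bbU(\btheta)\| - P_\bbU(\hat{\btheta}_{t-1})/\|P_\bbU(\hat{\btheta}_{t-1})\|\bigr\|_2$. I would then use the elementary inequality $\bigl\|\bu/\|\bu\| - \bv/\|\bv\|\bigr\| \le 2\|\bu-\bv\|/\max\{\|\bu\|,\|\bv\|\}$ together with $\|P_\bbU(\btheta)\|_2>0$ from \cref{as:as1} to obtain $\|P_\bbU(X_\bbU^*(\btheta)-X_t)\|_2 \le (2J/\|P_\bbU(\btheta)\|_2)\,\|P_\bbU(\btheta-\hat{\btheta}_{t-1})\|_2$. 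This replaces the crude constant bound $\|P_\bbU(X_\bbU^*-X_t)\|_2 \le 2Z$ used in \cref{eq:bounddifference_1}.

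Next, exactly as in \cref{eq:regret2pro} and \cref{eq:useconcentration}, on the high-probability event the instantaneous greedy regret is at most $\|P_\bbU(X_\bbU^*(\btheta)-X_t)\|_{\bV_{t-1}^{-1}}\beta_{t-1} \le \|P_\bbU(X_\bbU^*(\btheta)-X_t)\|_2\,\beta_{t-1}/\sqrt{\lambda_{t-1,\text{min}}^{\bbD}}$ by \cref{eq:eiginequality}. The uniform-in-$\bx$ concentration bound \cref{eq:concentration} also yields $\|P_\bbU(\btheta-\hat{\btheta}_{t-1})\|_2 = \sup_{\bu\in\bbU,\|\bu\|=1}\ip{\bu}{\btheta-\hat{\btheta}_{t-1}} \le \beta_{t-1}/\sqrt{\lambda_{t-1,\text{min}}^{\bbD}}$. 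Substituting the smoothness bound gives an instantaneous greedy regret of order $\beta_{t-1}^2/\lambda_{t-1,\text{min}}^{\bbD}$ — crucially a factor $1/\lambda$ rather than the $1/\sqrt{\lambda}$ that produced the $n^{2/3}$ rate in \cref{theorem:GENTRY_infinite}. Finally, repeating the argument of \cref{eq:infeig}--\cref{eq:almostdone} with $\epsilon_t^s$ in place of $\epsilon_t^i$: since $\sum_{i\le t}\epsilon_i^s = \Theta(k\sqrt{t})$, Bernstein's inequality \cref{eq:bernstein} gives $N_t(\bx)=\Omega(\sqrt{t})$ for all $\bx\in\bbD_k$ except on an event of probability $\le k\,e^{-c\sqrt{t}}$, so that $\E[1/\lambda_{t,\text{min}}^{\bbD_k}] = O(t^{-1/2})$. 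Combining with $\beta_t^2=O(d\log t)$ and summing $\sum_{t=1}^n O(\beta_t^2\,t^{-1/2})$ together with the $O(k\sqrt{n})$ exploration term yields the claimed $O(d\sqrt{n})$ projection regret.

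The main obstacle is the passage through \cref{as:as3}: the condition is stated for \emph{normalized} projected parameter vectors, so I must control the angle between $P_\bbU(\btheta)$ and $P_\bbU(\hat{\btheta}_{t-1})$ by the Euclidean estimation error, which the normalization inequality achieves only when $P_\bbU(\hat{\btheta}_{t-1})\ne 0$; the complementary event, together with the low-probability confidence-failure event, must be absorbed into a $\sum_t O(t^{-2})=O(1)$ tail as in \cref{eq:bounddifference_0}. The second delicate point is the careful bookkeeping of the \emph{two} separate invocations of \cref{eq:concentration} so that the per-step bound genuinely carries $\beta_{t-1}^2/\lambda_{t-1,\text{min}}^{\bbD}$: routing through the $\|\cdot\|_{\bV_{t-1}^{-1}}$ norm rather than applying Cauchy--Schwarz directly (which would give only $\beta_{t-1}/\sqrt{\lambda_{t-1,\text{min}}^{\bbD}}$ and hence an inferior $n^{3/4}$ rate) is precisely what turns the $n^{2/3}$ scaling into $\sqrt{n}$.
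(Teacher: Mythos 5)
Your overall architecture (split exploration/exploitation as in \cref{eq:R_inf}, note that on $E_t^c$ the greedy arm is $X_\bbU^*(\hat{\btheta}_{t-1})$, convert \cref{as:as3} plus the normalization inequality into a bound involving $\|P_\bbU(\btheta-\hat{\btheta}_{t-1})\|_2$, and show $\E[1/\lambda_{t,\mathrm{min}}^{\bbD_k}]=O(t^{-1/2})$ under the $\epsilon_t^s$ schedule) matches the paper's proof. But your final quantitative step has a genuine gap: you bound the per-step greedy regret by $O\big(\beta_{t-1}^2/\lambda_{t-1,\mathrm{min}}^{\bbD}\big)$ using two invocations of the high-probability inequality \cref{eq:concentration}, and since $\beta_t^2=\Theta(d\log t)$, summing gives
\begin{align*}
\sum_{t=1}^{n}O\!\left(\beta_t^2\, t^{-1/2}\right)=\Theta\!\left(d\sqrt{n}\,\log n\right),
\end{align*}
not the claimed $O(d\sqrt{n})$; the assertion at the end of your argument that this sum is $O(d\sqrt{n})$ is simply incorrect arithmetic ($\sum_{t\le n}\log t/\sqrt{t}=\Theta(\sqrt{n}\log n)$). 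Any route that passes through the confidence radius $\beta_t$ is stuck with this logarithmic loss, because $\beta_t$ carries the $\sqrt{d\log t}$ factor needed to make the confidence set hold uniformly.

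The paper avoids this by never invoking \cref{eq:concentration} in this proof. After Cauchy--Schwarz in the Euclidean norm and \cref{as:as3} (with Lemma 3.5 of \cite{rusmevichientong2010linearly}, the same normalization inequality you use), the per-step greedy regret is bounded by $\frac{2J}{\|P_\bbU(\btheta)\|}\E\big[\|P_\bbU(\btheta-\hat{\btheta}_{t-1})\|_2^2\big]$, and this \emph{mean-squared error} is then controlled directly from the closed form of the ridge estimator: writing $\hat{\btheta}_t=\bV_t^{-1}\sum_i(\ip{X_i}{\btheta}+\eta_i)X_i$, the bias term contributes $\lambda^2S^2\,\E[1/(\lambda_{t,\mathrm{min}}^{\bbD_k})^2]=O(t^{-1})$ and the noise term, using only $\E[\eta_i^2]\le\zeta$ and a trace manipulation, contributes $d\zeta\,\E[1/\lambda_{t,\mathrm{min}}^{\bbD_k}]=O(d\,t^{-1/2})$ (see \cref{eq:expect,eq:order_as3}), with no logarithm anywhere. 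Summing then yields the clean $O(d\sqrt{n})$. So the missing idea is to replace your high-probability/concentration bookkeeping with a direct in-expectation bias--variance analysis of $\hat{\btheta}_{t-1}$; as written, your argument proves only the weaker $O(d\sqrt{n}\log n)$ bound. (Your side remark that routing through the $\|\cdot\|_{\bV_{t-1}^{-1}}$ norm is \emph{necessary} to beat $n^{3/4}$ is also off: the paper uses plain Euclidean Cauchy--Schwarz, and the improvement comes from the squared error term produced by \cref{as:as3}, not from the weighted norm.)
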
 
\begin{proof}
The inequalities \cref{eq:R_inf,eq:regret2pro} from the proof of \cref{theorem:GENTRY_infinite} still hold. We have 
  \begin{align}
    & \E[ \ip{P_{\bbU}(X_\bbU^*) - P_{\bbU}(X_t)}{P_{\bbU}(\btheta)}\mid E_t^c] \nonumber \\ 
    & \le\E[\ip{P_{\bbU}(X_\bbU^*) - P_\bbU(X_t)}{P_{\bbU}(\btheta)- P_{\bbU}(\hat{\btheta}_{t-1})}]{E_t^c}\nonumber \\ 
    & \le\E[\left\Vert P_{\bbU}(X_\bbU^*) - P_{\bbU}(X_t) \right\Vert_2 \left\Vert P_{\bbU}(\btheta)- P_{\bbU}(\hat{\btheta}_{t-1}) \right\Vert_2]{E_t^c} \nonumber \\ 
    & \le \E[J \left\|\frac{P_{\bbU}(\btheta)}{\|P_{\bbU}(\btheta)\|} - \frac{P_{\bbU}(\hat{\btheta}_{t-1})}{\|P_{\bbU}(\hat{\btheta}_{t-1})\|}\right\|_2\left\Vert P_{\bbU}(\btheta)- P_{\bbU}(\hat{\btheta}_{t-1}) \right\Vert_2]{E_t^c} \nonumber \\  
    &\le  \E[\frac{2J\left\Vert P_{\bbU}(\btheta)- P_{\bbU}(\hat{\btheta}_{t-1}) \right\Vert_2^2}{\|P_{\bbU}(\btheta)\|}]{E_t^c} \nonumber \\ 
    &=  \E[\frac{2J\left\Vert P_{\bbU}(\btheta-\hat{\btheta}_{t-1}) \right\Vert_2^2}{\|P_{\bbU}(\btheta)\|}]
\end{align}
where the first inequality follows from \cref{eq:regret2pro}, the second inequality follows from Cauchy-Schwarz inequality, the third inequality follows from \cref{as:as3}, the penultimate inequality follows from Lemma 3.5 in \cite{rusmevichientong2010linearly}, and  the last equality follows from independence.

We next bound  $\E[\left\Vert P_{\bbU}(\btheta-\hat{\btheta}_{t}) \right\Vert_2^2]$. From \cref{eq:estimate},  we have,
\begin{align}
    \hat{\btheta}_t & = \parens*{\lambda \bI+\sum_{i = 1}^{t}X_iX_i\T}^{-1}\sum_{i = 1}^{t} r_iX_i \nonumber \\
    & =  \parens*{\lambda \bI+\sum_{i = 1}^{t}X_iX_i\T}^{-1}\sum_{i = 1}^{t} ( \ip{X_i}{\btheta} + \eta_i)X_i,\nonumber
\end{align}
which yields
\begin{align}
  &\E[\left\Vert P_{\bbU}(\btheta-\hat{\btheta}_{t}) \right\Vert_2^2]\nonumber \\
  &\le \E[\left\Vert P_{\bbU}(\lambda \bV^{-1}_t\btheta) \right\Vert_2^2 + \left\Vert  \sum_{i = 1}^{t}P_{\bbU}(\bV^{-1}_t X_i) \eta_i \right\Vert_2^2]  \nonumber \\
  &\le \lambda^2S^2\E[\ofrac{(\lambda_{t,\text{min}}^\bbD)^2}] + \zeta \E[\sum_{i = 1}^{t}P_{\bbU}(\bV^{-1}_t X_i)\T P_{\bbU}(\bV^{-1}_t X_i)] \nonumber \\
  &\le \lambda^2S^2\E[\ofrac{(\lambda_{t,\text{min}}^\bbD)^2}] + \zeta \E[\sum_{i = 1}^{t}\tr\left(P_{\bbU}(\bV^{-1}_t)\bV^{-1}_t(X_iX_i\T)\right)] \nonumber \\
  &\le \lambda^2S^2\E[\ofrac{(\lambda_{t,\text{min}}^\bbD)^2}] + \zeta \E[\tr\left(P_{\bbU}(\bV^{-1}_t)\bV^{-1}_t (\sum_{i = 1}^{t}X_iX_i\T\right)] \nonumber \\
    &\le \lambda^2S^2\E[\ofrac{(\lambda_{t,\text{min}}^\bbD)^2}] + \zeta \E[d\lambda_{\text{max}}(P_{\bbU}(\bV^{-1}_t))] \nonumber \\
   &\le \lambda^2S^2\E[\ofrac{(\lambda_{t,\text{min}}^\bbD)^2}] + d\zeta \E[\frac{1}{\lambda_{t,\text{min}}^\bbD}] \nonumber \\
   &\le \lambda^2S^2\E[\ofrac{(\lambda_{t,\text{min}}^{\bbD_k})^2}] + d\zeta \E[\ofrac{\lambda_{t,\text{min}}^{\bbD_k}}],\label{eq:expect}
\end{align}
where the second inequality has used the condition that $\eta_t$ are \gls{iid} and the standard result that $\E[\eta_t^2] \le \zeta$ where $\zeta$ is a constant depending on $\vartheta$ since $\eta_t$ has the zero-mean sub-Gaussian distribution with parameter $\vartheta$. 

Using a similar argument as that in the proof of \cref{theorem:GENTRY_infinite}, we can show
\begin{align}
  \E[\frac{1}{\lambda_{t,\text{min}}^{\bbD_k}}]= O(t^{-1/2}), \text{   and  }
  \E[\ofrac{(\lambda_{t,\text{min}}^{\bbD_k})^2}]= O(t^{-1}).
  \label{eq:order_as3}
\end{align}
The proof steps are skipped here for brevity. Finally, using \cref{eq:R_inf,eq:R_inf,eq:expect}, we have
\begin{align}
  &\E[R_\bbU(n)]\nonumber \\
  & \le\sum_{t=1}^{n}2ZS \epsilon_t^s +\E[\ip{P_{\bbU}(X_\bbU^*) - P_\bbU(X_t)}{P_\bbU(\btheta)}]{E_t^c}\nonumber \\
  &\le \sum_{t=1}^{n} dO(t^{-1/2})\nonumber \\
  & = O(d\sqrt{n}).
  \label{eq:final_s}
\end{align}
The proof of \cref{theorem:GENTRYsmooth} is now complete.  
\end{proof}

\section{Simulations and Performance Evaluation}\label{sec:simulation}
In this section, we verify the efficiency of our strategy by performing simulations on synthetic data and a wine quality dataset\cite{cortez2009modeling}. We compare against the following strategies:
\begin{enumerate}[1)]
\item the Uncertainty Ellipsoid (UE) policy \cite{rusmevichientong2010linearly}, 
  which chooses \[\argmax_{\bx\in \bbD} \bar{r}_{t-1}(\bx) +\alpha\sqrt{\log t\min\{d\log t,|\bbD|\}} \|\bx\|_{\bV_{t-1}^{-1}}\] at each time step $t$;  
\item the CorrUpted GReedy StratEgy (CURSE) by replacing $\argmax_{\bx\in \bbD} \bar{r}_{t-1}(\bx)$ with $\argmax_{\bx\in \bbD} \ip{\bx}{\hat{\btheta}_{t-1}}$ in GENTRY, i.e., the corruption in the observed return is not accounted for; and
\item the caRelEss GReEdy sTrategy (REGRET) where the $X_i$ in GENTRY are all replaced by $P_\bbU(X_i)$. To avoid any doubts, the full REGRET is shown in \cref{algo:REGRET}.
\end{enumerate}
 
\begin{algorithm}[!htp]
\caption{REGRET}
  \label{algo:REGRET}
\begin{algorithmic}[1]
  \REQUIRE Set $\epsilon_t$ to $\epsilon_t^f$ or $\epsilon_t^i$ according to the number of arms; set hyper parameter $\alpha>0$ and $\lambda>0$.
\LOOP
\STATE Update $\epsilon_t$. 
\STATE With probability $1-\epsilon_t$, choose 
\begin{align*}
\argmax_{\bx\in \bbD} \ip{P_\bbU(\bx)}{P_\bbU(\tilde{\btheta}_{t-1})},
\end{align*}
where
\begin{align*}
\tilde{\btheta}_t = \left(\sum_{i=1}^{t}P_\bbU(X_i)P_\bbU(X_i)\T+\lambda \bI\right)^{-1}\sum_{i=1}^{t}r_iP_\bbU(X_i),
\end{align*}
and with probability $\epsilon_t$ choose a random arm from $\bbD_k$.
\STATE  Update $\ip{P_\bbU(\bx)}{P_\bbU(\tilde{\btheta}_{t-1})}$. Set $t = t + 1$.\\
\ENDLOOP
\end{algorithmic} 
\end{algorithm}
In each simulation, we perform $2000$ trials, each with $10^4$ time steps. To compare the projection regret performance of different strategies, we compute the average empirical cumulative projection regret over all the trials. For convenience, this average is referred to as the \emph{average cumulative projection regret}.

\subsection{Experiments on Synthetic Data}\label{ssec:synthetic}
In this section, we compare the performance of different strategies using synthetic data.
We use three settings in the simulations:
\begin{enumerate}[(a)]
  \item \label{it:settinga}
    For the finite-arm case, in each trial, we generate the decision set $\bbD\subseteq \Real^d$ containing $K$ arms, in which each arm is associated with a $d$-dimension feature vector. Each dimension of the feature vector is drawn \gls{iid} from the uniform distribution $\calU(-1,1)$. Each dimension of the ground-truth parameter $\btheta$ is also drawn from $\calU(-1,1)$. We next generate the orthogonal projection matrix $P_\bbU = \bA(\bA\T\bA)^{-1}\bA\T$, where $\bA$ is a $d\times u$ ($u<d$) matrix whose elements are generated randomly from $\calU(-1,1)$. The noise $\eta_t$ at each time step is drawn \gls{iid} from the Gaussian distribution $\calN(0,\vartheta^2)$ with variance $\vartheta^2$. The decision set $\bbD$, parameter $\btheta$ and projection matrix $P_\bbU$ are fixed in each trial.
  \item \label{it:settingb}
      We use the same setting as in setting \ref{it:settinga} except that the projection matrix $P_\bbU$ in this setting is a diagonal matrix whose $(i,i)$ entry is $1$ for $i=1,\cdots,u$ and $0$ otherwise. This means that the last $d-u$ dimensions of $\btheta$ are the protected features.
  \item \label{it:settingc}
    For the infinite-arm case, in each trial, the decision set $\bbD$ is limited to a convex set for ease of computation. Specifically, we use the convex set $\bbD$:
    \begin{align}\label{eq:convexD}
      \bbD = \left\{\bx: \sum_{i=1}^{d} \bx(i)\log \bx(i) \le 5 \text{ and } \bx(i) \ge 0\ \forall\ i  \right\},
\end{align}
where $\bx(i)$ is the $i$-th entry of $\bx$, $0\log 0 := 0$, and $\btheta$, $P_\bbU$ and $\eta_t$ are generated the same way as in setting \ref{it:settinga}.

  \end{enumerate}
%

In the following, GENTRY, CURSE, and REGRET use the setting $\epsilon = \epsilon_t^f$ described in \cref{sec:strategy} when in setting \ref{it:settinga} and \ref{it:settingb} for the finite-arm case. Accordingly, $\epsilon = \epsilon_t^i$ is used when in setting \ref{it:settingc} for the infinite-arm case. In all simulations, the parameter $\lambda$ is set to $1$.

\subsubsection{Varying \texorpdfstring{$\alpha$}{alpha}}\label{sssec:vary_alpha}

\begin{figure}[!htb]
    \centering
    \begin{subfigure}[b]{0.9\columnwidth}
        \centering
	\includegraphics[width=\linewidth]{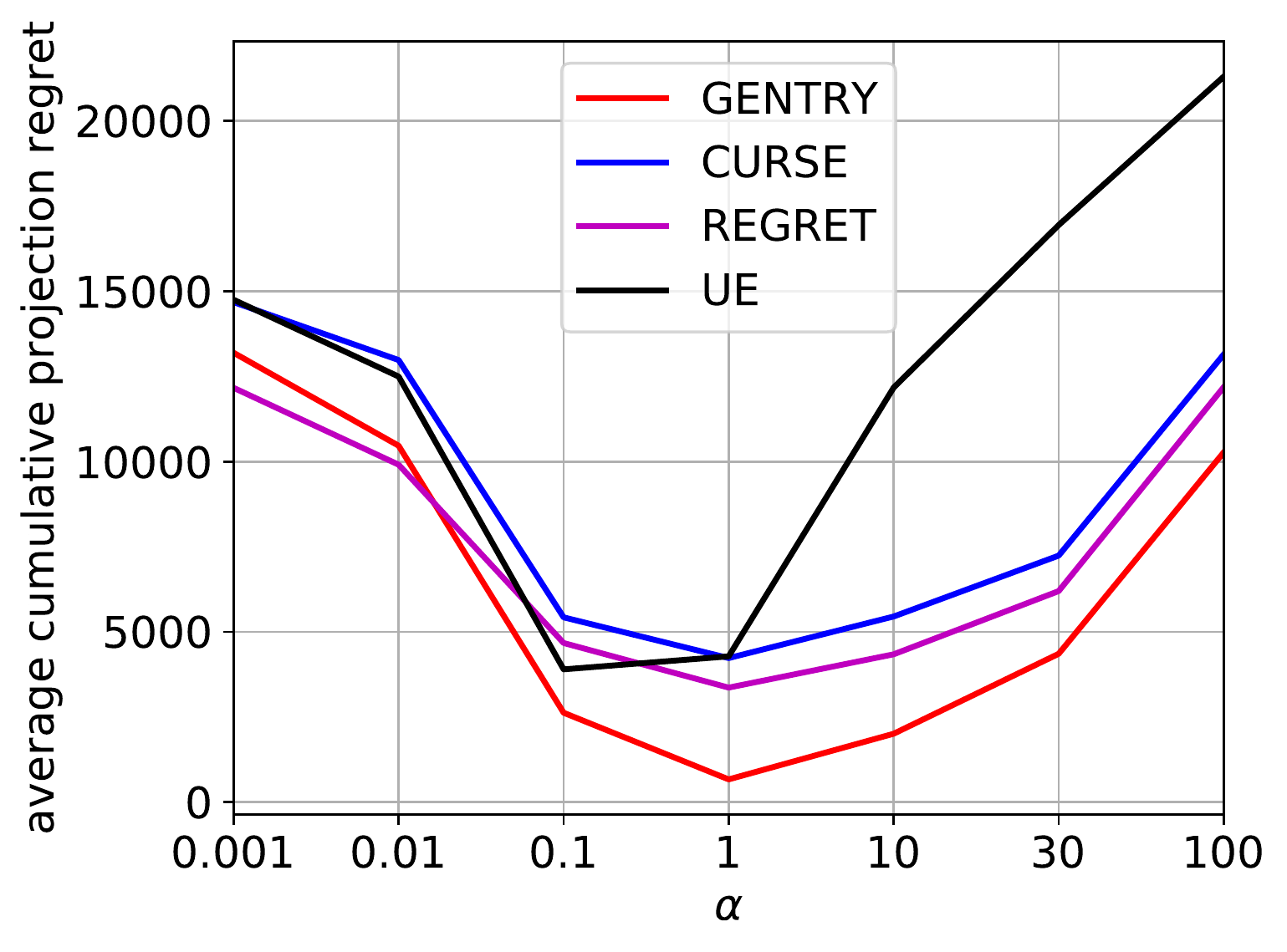}
	\caption{Using setting \ref{it:settinga} with $d=10$, $K=45$, and $u = 5$}
        \label{fig:parameter}
    \end{subfigure}\\
    \begin{subfigure}[b]{0.9\columnwidth}
        \centering
	\includegraphics[width=\linewidth]{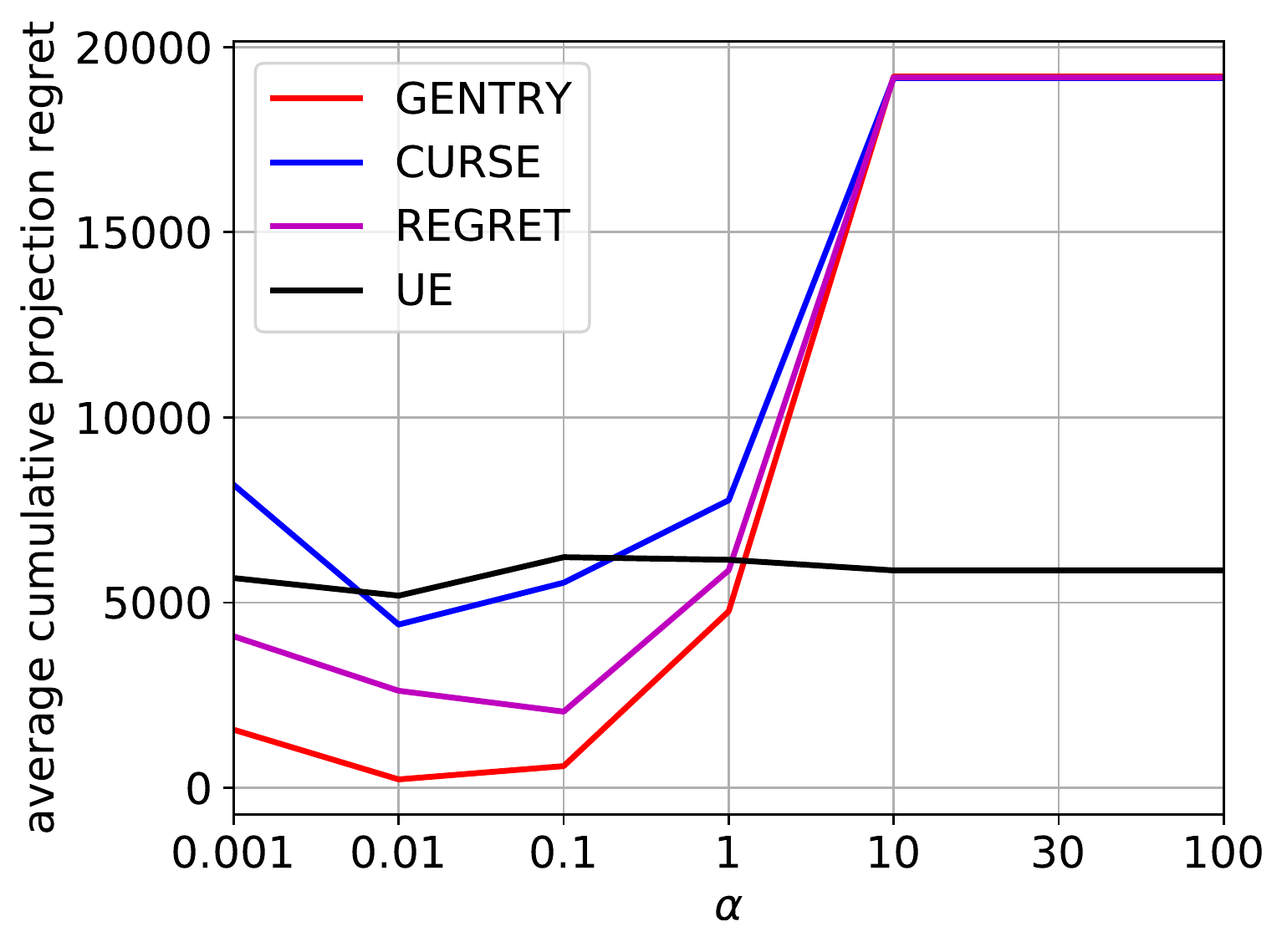}
        \caption{Using setting \ref{it:settingc} with  $d=4$, and $u = 2$}
        \label{fig:parameter_inf}
    \end{subfigure}
    \caption{Regret comparison between different strategies with $\vartheta=0.5$ and varying $\alpha$.}
\end{figure}

All the strategies depend on the hyperparameter $\alpha$, with a larger $\alpha$ corresponding to more exploration on average. We do simulations using setting \ref{it:settinga} with $d=10$, $K=45$, $\vartheta = 0.5$, $u = 5$, and setting \ref{it:settingc} with $d=4$, $\vartheta = 0.5$, $u = 2$. \cref{fig:parameter,fig:parameter_inf} show how the average cumulative projective regret at time step $10^4$ changes with different $\alpha$ in each strategy. We observe the following:
\begin{itemize}
  \item We note that a moderate $\alpha$ is optimal for each strategy. When $\alpha$ is too large, the strategy explores too frequently, leading to a large projection regret as expected. On the other hand, a small $\alpha$ results in little exploration, and good arms cannot be found efficiently. 
  \item In \cref{fig:parameter}, GENTRY with $\alpha = 1$ outperforms all the other benchmark strategies. This is because the other strategies do not achieve an asymptotically unbiased estimation of the projection reward. \cref{fig:parameter_inf} shows similar results. 

\end{itemize}
In the following simulations, for a fair comparison, we tune the parameter $\alpha$ for each strategy to achieve the best average asymptotic cumulative projection regret for that strategy. Specifically, we set $\alpha=1$ for all the strategies except UE, which is given a $\alpha=0.1$ when using setting \ref{it:settinga} and \ref{it:settingb}. When using setting \ref{it:settingc}, we set $\alpha=0.01$ for all the strategies except REGRET, which is given a $\alpha=0.1$.

\subsubsection{Results and Analysis}

\begin{figure}[!htb]
    \centering
    \begin{subfigure}[b]{0.9\columnwidth}
        \centering
	\includegraphics[width=\linewidth]{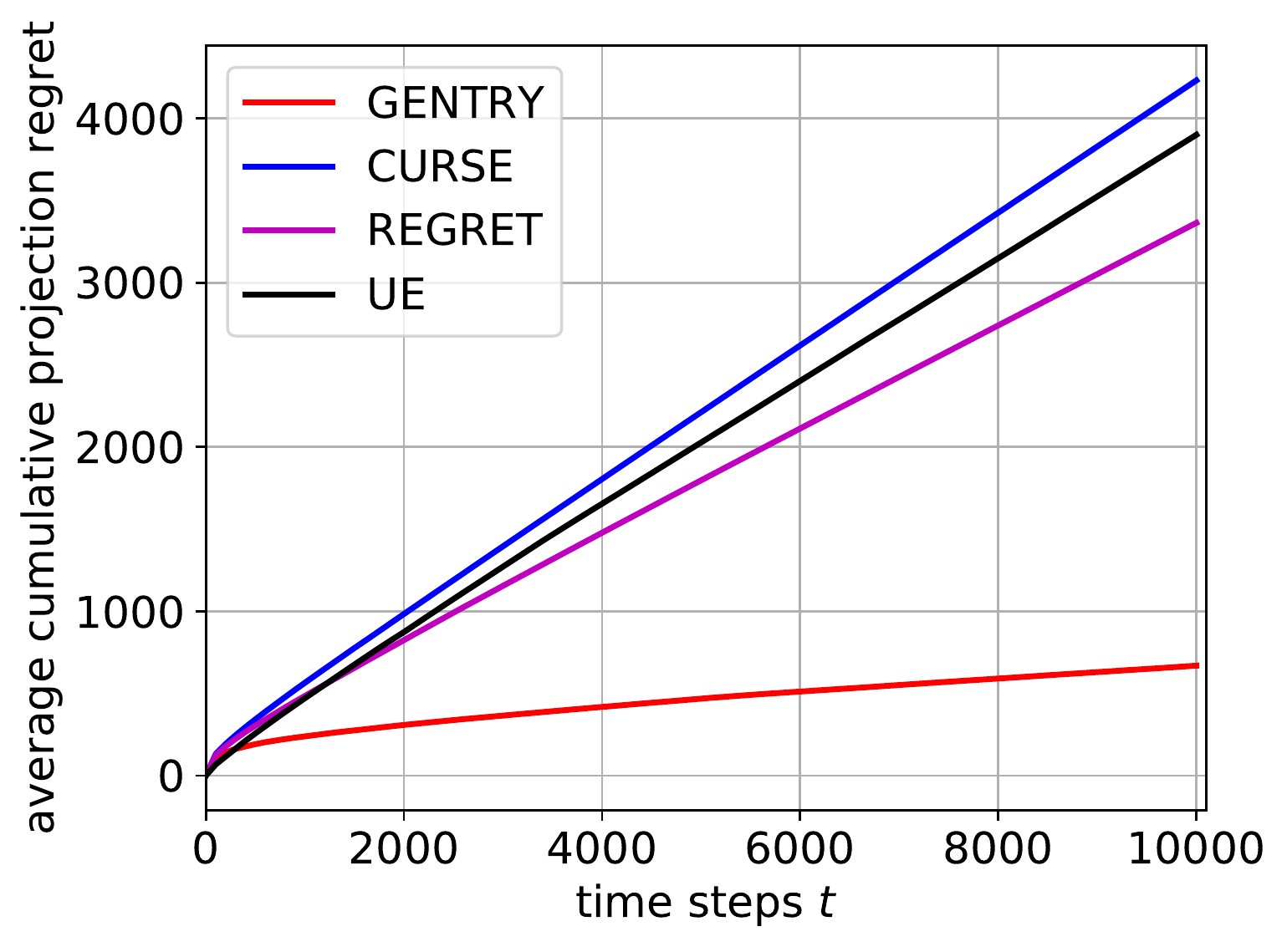}
        \caption{Average cumulative projection regret}
        \label{fig:regreta}
    \end{subfigure}\\
    \begin{subfigure}[b]{0.9\columnwidth}
        \centering
	\includegraphics[width=\linewidth]{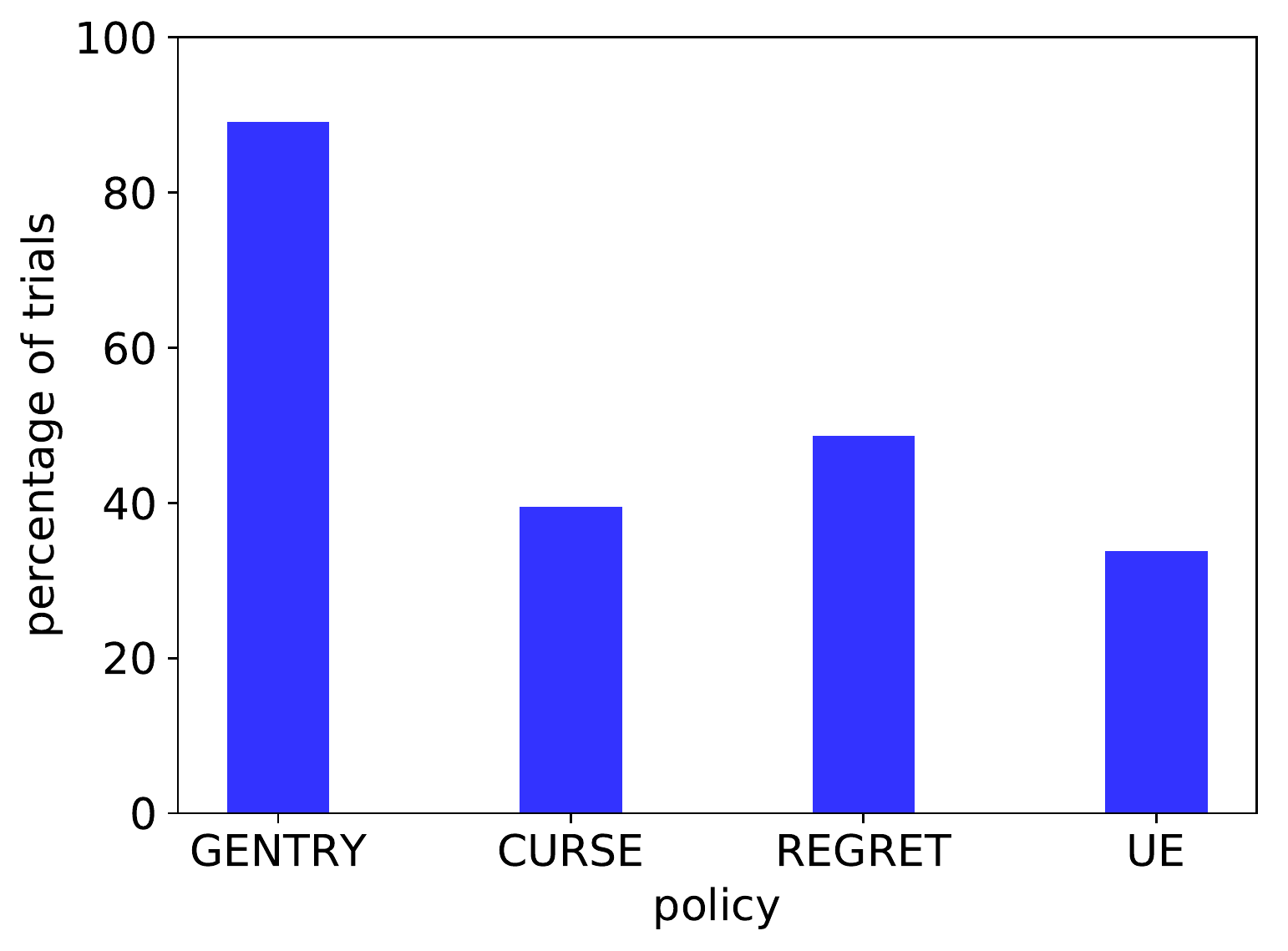}
        \caption{Percentage of trials where the best decision is found}
        \label{fig:n_besta}
    \end{subfigure}
    \caption{Performance comparison between different strategies using setting~\ref{it:settinga} with $d=10$, $K=45$, $\vartheta=0.5$ and $u = 5$. }\label{fig:result_a}
\end{figure}
\begin{figure}[!htb]
    \centering
    \begin{subfigure}[b]{0.9\columnwidth}
        \centering
	\includegraphics[width=\linewidth]{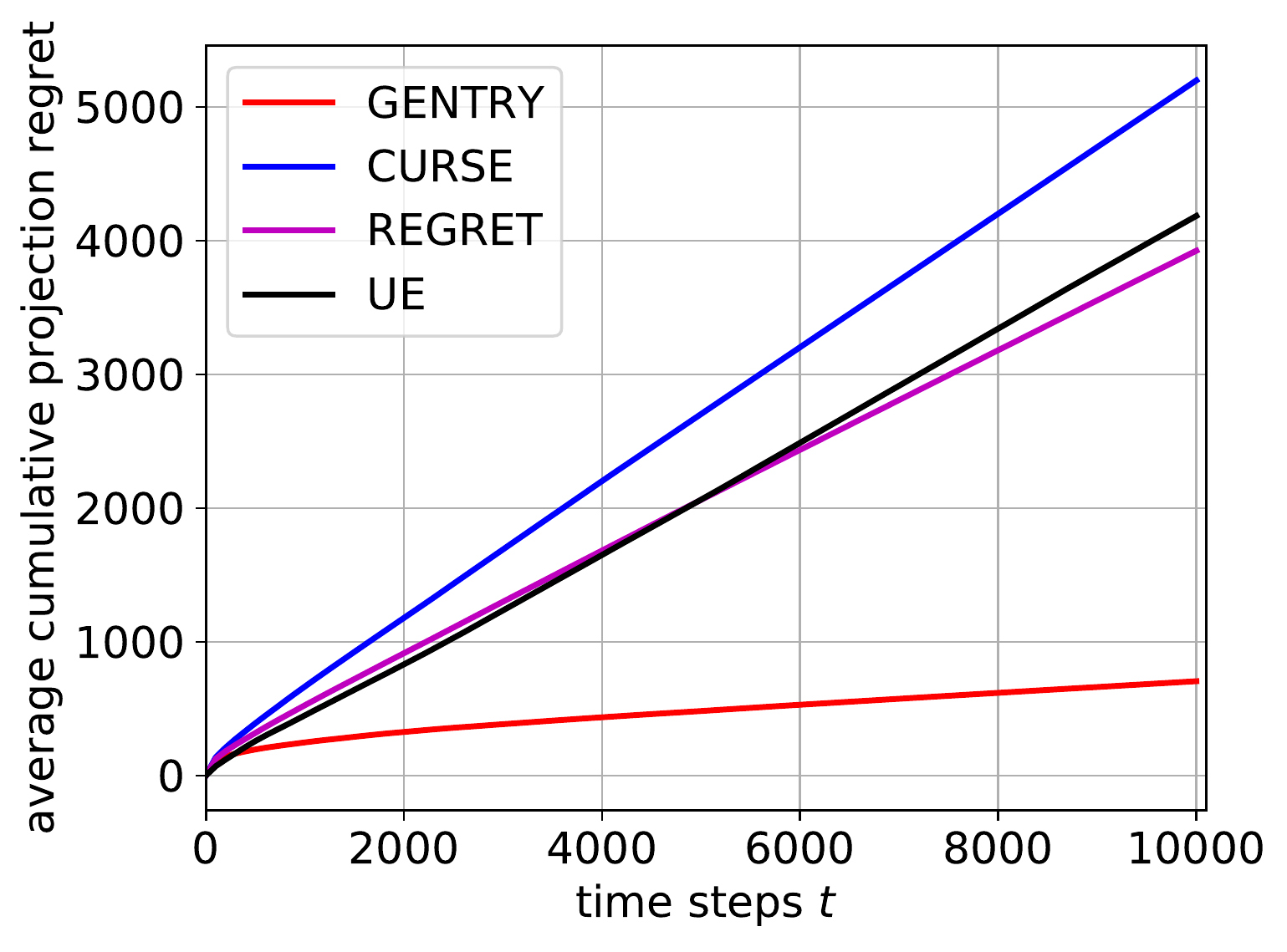}
        \caption{Average cumulative projection regret}
        \label{fig:regretb}
    \end{subfigure}\\
    \begin{subfigure}[b]{0.9\columnwidth}
        \centering
	\includegraphics[width=\linewidth]{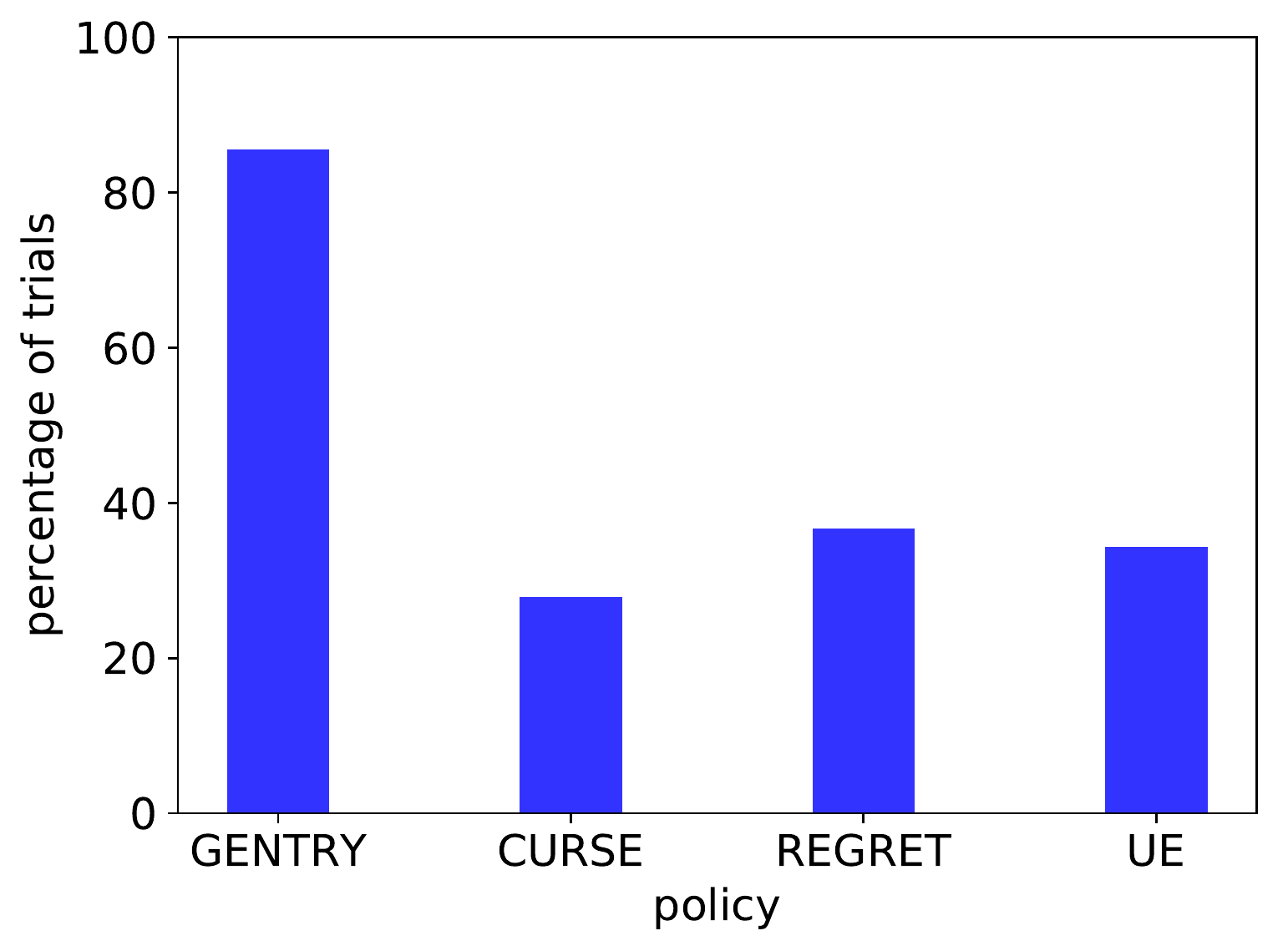}
        \caption{Percentage of trials where the best decision is found}
        \label{fig:n_bestb}
    \end{subfigure}
    \caption{Performance comparison between different strategies using setting~\ref{it:settingb} with $d=10$, $K=45$, $\vartheta=0.5$ and $u = 5$. }\label{fig:result_b}
\end{figure}

In the simulations for the finite-arm case using settings~\ref{it:settinga} and \ref{it:settingb}, we set $d=10$, $K=45$, $\vartheta=0.5$ and $u = 5$. The simulation results are shown in \cref{fig:result_a,fig:result_b}. We observe the following:
\begin{itemize}
  \item The average cumulative projection regrets of different strategies are shown in \cref{fig:regreta,fig:regretb}.  We see that GENTRY has obvious sublinear cumulative projection regret performance. The other benchmark strategies all suffer from a linearly increasing cumulative projection regret. This verifies that if our objective is to maximize the cumulative projection reward instead of the cumulative return (including the corruption), GENTRY is more appropriate. 
  \item \cref{fig:n_besta,fig:n_bestb} show the percentage of trials in which the optimal arm is chosen for more than $90\%$ of the time in the last $200$ time steps. We see that GENTRY finds the optimal arm in most trials, and outperforms the other strategies by a significant margin.
\end{itemize}

In the simulations for the infinite-arm case using setting~\ref{it:settingc}, we set $d=4$, $\vartheta=0.5$ and $u = 2$. From \cref{fig:regretc}, we also observe that GENTRY has obvious sublinear cumulative projection regret performance. The other benchmark strategies have linearly increasing cumulative projection regret. This verifies the efficiency of GENTRY for the infinite-arm case.

\begin{figure}[!htpb]
        \centering
	\includegraphics[width=\linewidth]{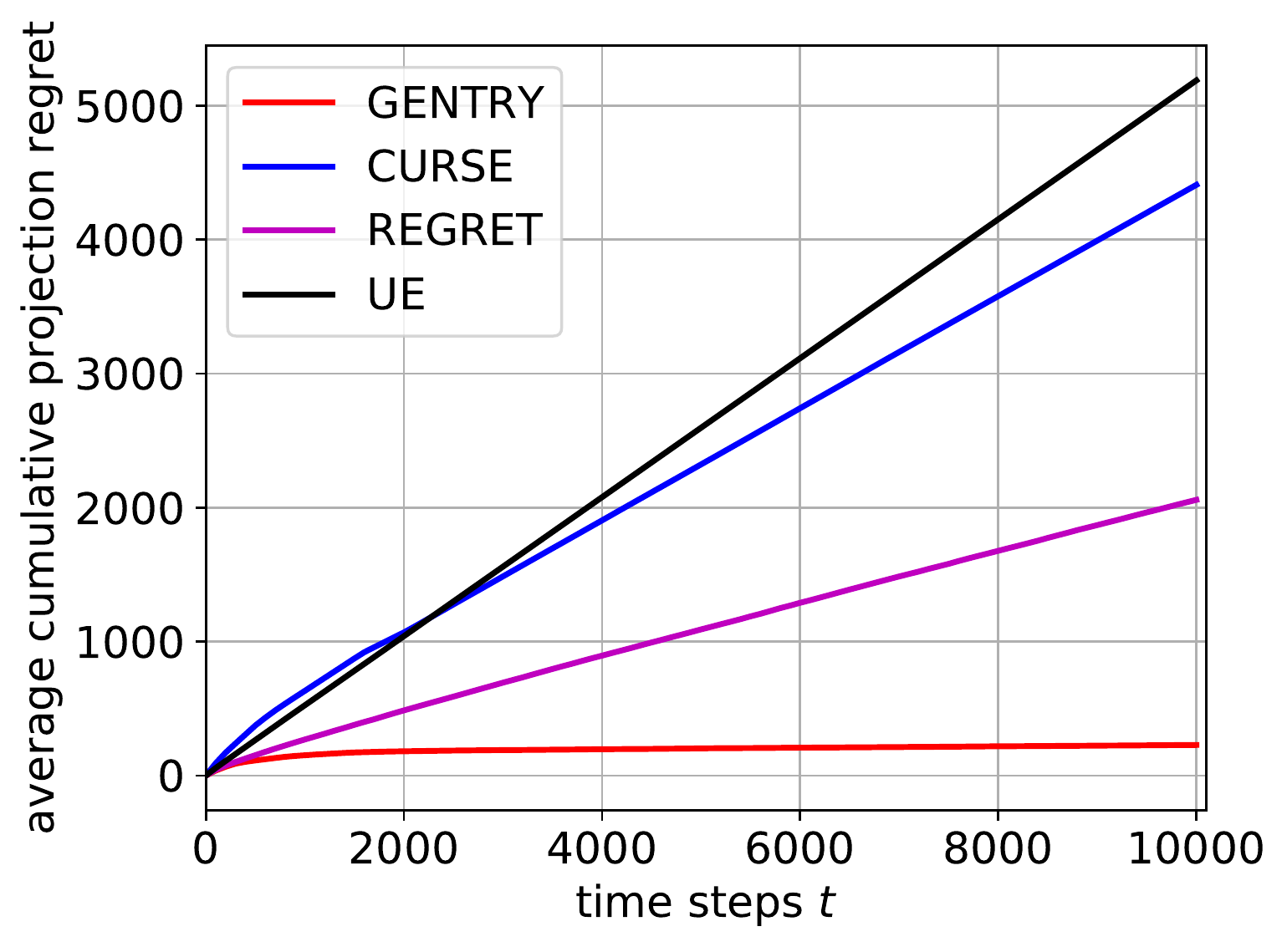}
        \centering	
	\caption{Average cumulative projection regret using setting \ref{it:settingc} with $d=4$, $u = 2$ and $\vartheta=0.5$.}       
    \label{fig:regretc}
\end{figure}

\subsection{Experiments on Wine Quality Dataset}\label{ssec:real_data}
We next compare the performance of different strategies using the wine quality dataset, which contains $11$-dimension description vectors of $4898$ white wines and their ratings (scores between $0$ and $10$). In each trial, we randomly select $200$ wines with ratings $4$, $5$, $6$, $7$ and $8$, as the decision set $\bbD$, since among all the wines there are only $5$ wines with ratings larger than $8$ and $20$ wines with ratings less than $4$. Each dimension of the wine description vector is a physicochemical characteristic like volatile acidity, chlorides, or density. Due to privacy and logistic issues\cite{cortez2009modeling}, only the physicochemical characteristics and ratings are available (e.g., there is no data about grape types, wine brand, wine selling price, etc.). We add one additional feature drawn \gls{iid} from $\calU(0,1)$ as the protected feature. The corresponding rating of each wine is then corrupted by subtracting $4$ times this protected feature value from the original rating. In this experiment, we take the original rating as the projection reward. Finally, we add a constant $1$ as the constant feature to each description vector. If we put the protected feature as the final dimension of each wine, then $d=13$, and the projection matrix $P_\bbU$ is defined as a diagonal matrix whose $(i,i)$ entry is 1 for $i=1,\cdots,12$, and $0$ for $i=13$. As there are finitel many arms, we use $\epsilon = \epsilon_t^f$ in this experiment. The results are shown in \cref{fig:regret_wine}, from which we observe that GENTRY outperforms all the other strategies.

\begin{figure}[!htpb]
  \centering
  \includegraphics[width=\linewidth]{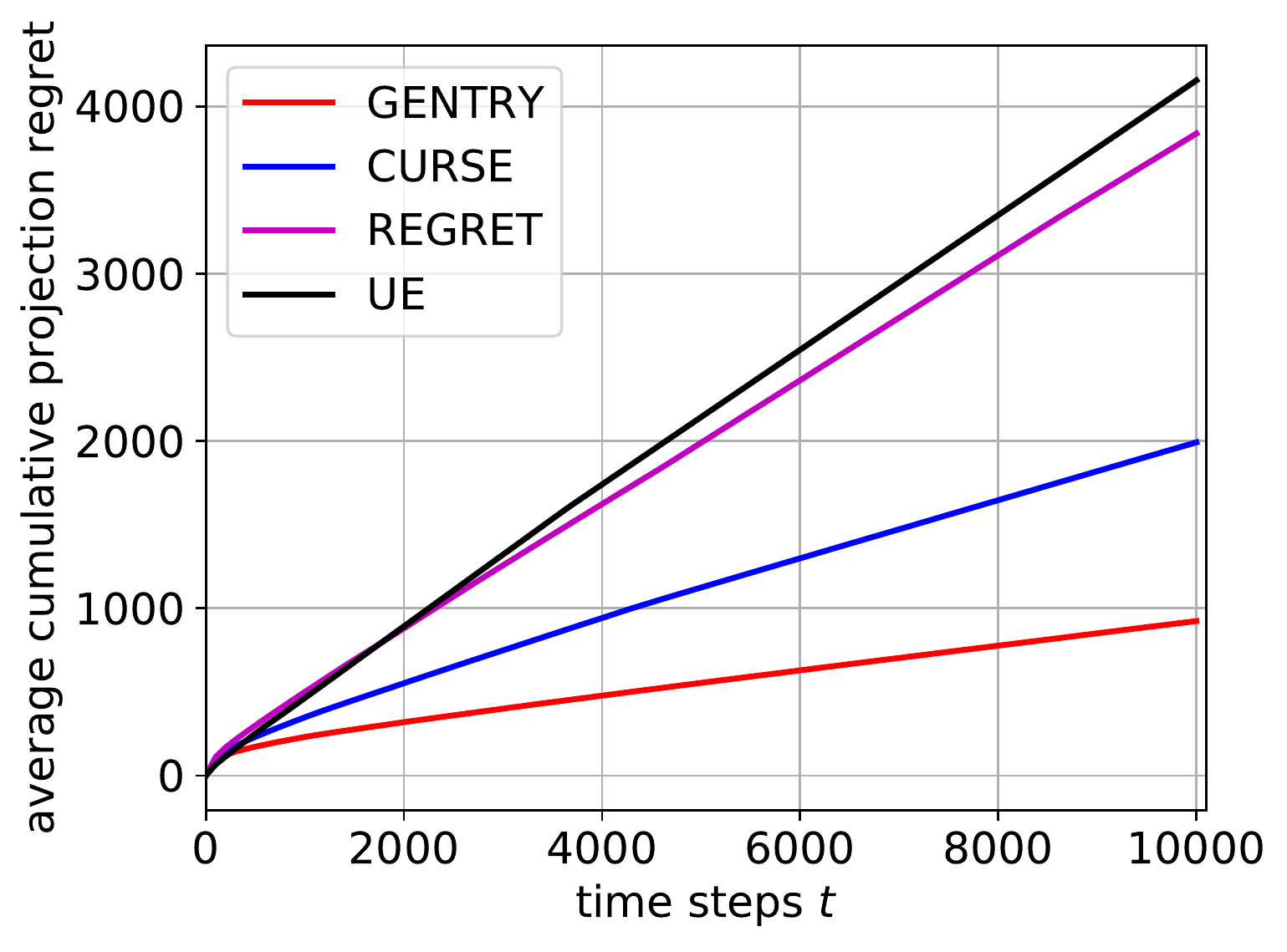}
 \centering
 \caption{Average cumulative projection regret using the wine quality dataset.}
		\label{fig:regret_wine}
\end{figure}

\section{Conclusion} \label{sec:conclusion}

We have formulated the orthogonal projection problem in the linear stochastic bandit model, where the objective is to maximize the cumulative projection reward over a subspace of decision attributes based on observed returns that consist of the projection reward with corruption. Our proposed GENTRY achieves sublinear projection regret for the finite- and infinite-arm cases. Experiments verify the efficiency of our strategy. Our formulation and strategy are useful in avoiding discrimination in recommendation systems and in mixed drug treatment trials.

In this paper, we have assumed that the target subspace is known beforehand when decomposing the reward. However, in practice, we may not know what is a suitable projection subspace \emph{a priori}. It is of interest in future research to develop methods to learn this subspace in conjunction with the best arm using possibly additional side-information.

\bibliographystyle{IEEEtran}
\bibliography{IEEEabrv,citeproj}

\end{document}